%% file: main.tex
\documentclass[10pt]{article} 

\usepackage[preprint]{tmlr}

\input{math_commands.tex}

\usepackage{hyperref}
\usepackage{url}
\usepackage{lmodern}
\usepackage{microtype}
\usepackage{amsmath,amssymb,amsfonts,amsthm,mathtools}
\usepackage{bm}
\usepackage{bbm}
\usepackage{mathrsfs}
\usepackage{enumitem}
\usepackage{booktabs}
\usepackage{multirow}
\usepackage{array}
\usepackage{xcolor}
\usepackage[nameinlink,capitalise,noabbrev]{cleveref}
\usepackage{graphicx}
\usepackage{tikz-cd}
\usepackage{algorithm}
\usepackage{algpseudocode}
\usepackage{caption}
\usepackage{subcaption}
\usepackage{setspace}
\usepackage{comment}
\usepackage{float}              
\usepackage[section]{placeins}  
\usepackage[table]{xcolor}

\newtheorem{theorem}{Theorem}[section]
\newtheorem{proposition}[theorem]{Proposition}
\newtheorem{lemma}[theorem]{Lemma}

\newtheorem{definition}[theorem]{Definition}

\newcommand{\tr}{\operatorname{Tr}}
\newcommand{\rank}{\operatorname{rank}}
\newcommand{\diag}{\operatorname{diag}}

\newcommand{\ip}[2]{\left\langle #1, #2 \right\rangle}
\newcommand{\norm}[1]{\left\lVert #1 \right\rVert}
\newcommand{\abs}[1]{\left\lvert #1 \right\rvert}
\newcommand{\Had}{\odot}
\newcommand{\mc}[1]{\mathcal{#1}}

\newcommand{\Hadpow}[2]{\left(#1\right)^{\Had #2}}
\newcommand{\Hadsqrt}[1]{\Hadpow{#1}{\frac{1}{2}}}

\title{A Gauge Theory of Superposition: Toward a Sheaf-Theoretic Atlas of Neural Representations}

\author{\name Hossein Javidnia \email hossein.javidnia@dcu.ie \\
      \addr School of Computing\\
      Dublin City University, Ireland}

\def\month{02}  
\def\year{2026} 
\def\openreview{\url{https://openreview.net/forum?id=XXXX}} 

\begin{document}

\maketitle

\begin{abstract}
We develop a discrete gauge-theoretic framework for superposition in large language models (LLMs) that replaces the single-global-dictionary premise with a sheaf-theoretic atlas of local semantic charts.
Contexts are clustered into a stratified context complex; each chart carries a local feature space together with a local information-geometric metric (Fisher/Gauss--Newton) that identifies which feature interactions are predictively consequential \cite{amari2016infogeom,martens2015kfac}.
This induces a Fisher-weighted interference energy and three geometric obstructions to global interpretability:
\textbf{(O1) Local jamming} (active load exceeds Fisher bandwidth), \textbf{(O2) proxy shearing} (mismatch between geometric transport and a fixed correspondence proxy across overlaps), and \textbf{(O3) nontrivial holonomy} (path-dependent transport around loops).

We prove and empirically instantiate four results on a frozen baseline open-weight LLM (\texttt{Llama~3.2~3B Instruct}) using standard text and code corpora (WikiText-103 raw, a C4-derived English web-text subset, and \texttt{the-stack-smol}) \cite{meta2024llama32modelcard,merity2016wikitext,raffel2020t5,kocetkov2022stack}, and we further assess robustness through additional model-, layer-, seed-, null-, graph-, and transport-level ablations.
(A) A constructive spanning-tree gauge theorem shows that, after gauge fixing on a spanning tree, each chord residual equals the holonomy of its fundamental cycle, making holonomy computable and gauge-invariant.
(B) Shearing lower-bounds a data-dependent transfer mismatch energy, turning $D_{\mathrm{shear}}$ into an unavoidable failure bound.
(C) A non-vacuous certified jamming/interference bound holds on consequential atom subsets with high coverage and zero violations across random seeds and hyperparameters.
(D) Bootstrap and sample-size experiments establish estimator stability for $D_{\mathrm{shear}}$ and $D_{\mathrm{hol}}$, with improved concentration on persistent (well-conditioned) subsystems predicted by polar-factor perturbation theory \cite{higham1986polar,schonemann1966procrustes}. Additional appendix experiments show that the diagnostics transfer across multiple model families and layers, remain stable across downstream seeds, respond strongly to a random-bases null control, and are robust to moderate changes in atlas, graph, and transport settings.
Our framework supplies reproducible evaluation primitives for \emph{when and why} global interpretability breaks, separating failures due to local overload (jamming), cross-chart mismatch (shearing), and intrinsically path-dependent transport (holonomy). \\
Code: \url{https://github.com/hosseinjavidnia/gauge_superposition}
\end{abstract}

\section{Introduction}

Mechanistic interpretability often assumes that a single feature dictionary can explain a layer uniformly across contexts \cite{bereska2024mi_review,conmy2023acd}.
However, both toy models of superposition and large-scale feature-learning results suggest that this premise can fail: representations may encode more features than ambient dimension by packing sparse directions, inducing interference and context-dependent feature identities \cite{elhage2022toy,anthropic2023monosemantic,anthropic2024scaling}.
We therefore reframe the goal: interpretability should be treated as a \emph{local-to-global} construction problem, where feature descriptions are locally coherent but may exhibit measurable obstructions to being glued into a single global account.

\subsection{Contribution and paper thesis}
We model the space of contexts by a stratified simplicial complex (implemented as a graph of context clusters), and we attach to each cluster a local feature space.
Transitions between neighbouring clusters are represented by \emph{partial transports}; loop compositions define holonomy.
The central thesis is that failures of global interpretability can be decomposed into three distinct obstructions (O1--O3) that are measurable from a frozen model.

To avoid ambiguity about what is proven versus what is measured, we isolate four results (A--D) that connect theory to experiments:

\begin{itemize}[leftmargin=2em]
\item \textbf{(A) Spanning-tree gauge identity.} After a constructive gauge fixing on a spanning tree, tree-edge defects vanish and each chord defect equals the holonomy of its fundamental cycle (Theorem~\ref{thm:A}).
This makes holonomy computable and interpretable, rather than an abstract loop product.

\item \textbf{(B) Shearing implies unavoidable mismatch.} The shearing score $D_{\mathrm{shear}}(u,v)$ lower-bounds a concrete transfer mismatch energy on overlap data (Theorem~\ref{thm:B}).
This turns shearing into an obstruction with operational meaning.

\item \textbf{(C) Certified jamming forces interference.} A non-vacuous lower bound certifies forced interference on a consequential subset of atoms (Theorem~\ref{thm:C}).
We evaluate not merely correctness, but also non-vacuity (coverage), and robustness across seeds and hyperparameters.

\item \textbf{(D) Estimator stability.} Bootstrap and sample-size curves establish that $D_{\mathrm{shear}}$ and $D_{\mathrm{hol}}$ are stable measurements under finite overlap sampling, especially on well-conditioned persistent subsystems (Section~\ref{sec:D}).
\end{itemize}

To test whether these diagnostics reflect structural properties of learned representations rather than artefacts of one checkpoint or one implementation setting, the appendix reports a broader suite of robustness experiments. In particular, we report additional model-generalisation experiments (\texttt{Qwen~2.5~3B Instruct} \cite{qwen2024qwen25,qwen2024qwen25_3b_instruct} and \texttt{Gemma~2~2B~IT} \cite{gemmateam2024gemma2,google2024gemma2_2b_it}), a layerwise sweep within the baseline Llama model, downstream seed-reproducibility tests, a random-bases null control, and robustness checks for atlas, graph, and transport hyperparameters. These experiments are intended to distinguish structural properties of the learned representation from artefacts of a single model or configuration.

\subsection{Significance and impact}
Most mechanistic interpretability work implicitly assumes that a single, globally consistent feature dictionary can describe a layer across diverse contexts.
Our results show that this premise can fail in structured, measurable ways even for a frozen model, and we propose a concrete alternative: interpretability as an \emph{atlas construction problem} with explicit local-to-global obstructions.
The central impact is methodological: we turn context dependence from an informal caveat into a set of \emph{computable, falsifiable diagnostics} that decompose global failure into three distinct mechanisms---local overload (jamming), cross-chart incompatibility (proxy shearing), and intrinsically path-dependent transport (holonomy).

Concretely, Result~A provides a constructive gauge computation that reduces holonomy from an abstract loop statistic to a directly checkable quantity: after spanning-tree gauge fixing, chord residuals coincide with fundamental-cycle holonomy, and the identity checks hold to numerical precision.
Result~B connects proxy disagreement to operational consequences by lower-bounding a realised transfer-mismatch energy, making large shearing edges interpretable as unavoidable cross-context mismatch \emph{with respect to a fixed correspondence proxy}.
Result~C shows that interference need not remain qualitative: we obtain \emph{non-vacuous} certified lower bounds on consequential subsets with high coverage and zero violations across random seeds and hyperparameters.
Finally, Result~D establishes that the proposed measurements are not fragile artefacts of overlap sampling: bootstrap and sample-size experiments show concentration with increasing overlap size and improved stability under conditioning-based persistence filtering.

Taken together, the framework supplies evaluation primitives for \emph{when} and \emph{why} global interpretability breaks, enabling atlas-level summaries that are reproducible, stability-tested, and comparable across subsystems, rather than anecdotal case studies. The appendix further shows that these diagnostics are not confined to a single checkpoint or one hand-tuned configuration: they transfer across multiple model families, vary systematically across depth, remain stable across downstream seeds, respond strongly to a random-bases null control, and are robust to moderate changes in atlas, graph, and transport construction.

\subsection{Related work}
Superposition toy models and monosemantic feature extraction motivate our focus on interference and overcomplete features \cite{elhage2022toy,anthropic2023monosemantic,anthropic2024scaling}.
Our use of Fisher/Gauss--Newton geometry follows classical information geometry and second-order optimisation literature \cite{amari2016infogeom,martens2015kfac}.
The atlas perspective is naturally expressed using cellular sheaves, which provide a principled language for local-to-global structure over combinatorial bases \cite{curry2014thesis,hansen2019spectral}. Neural message-passing models have recently incorporated sheaf-theoretic structure explicitly, providing complementary perspectives on local-to-global consistency and transport constraints \cite{bodnar2022neural_sheaf_diffusion}. In particular, cooperative sheaf constructions study how multiple local predictors can be coupled via sheaf constraints, offering a modern ML analogue of atlas-style consistency conditions \cite{ribeiro2025cooperative_sheaf_nn}.
Our gauge/holonomy viewpoint is aligned with group synchronisation on graphs \cite{singer2011angular}, but differs in that the underlying objects are derived from learned transports and correspondence proxies rather than direct measurements of group elements.
Recent work has proposed \emph{representation holonomy} as a gauge-invariant statistic for path dependence in learned representations, with estimators based on (whitened) local Procrustes alignment around loops in input space \cite{sevetlidis2026holonomy}.
Our setting differs in both object and goal: we build an \emph{atlas over context clusters} (a discrete context complex) and study transports/defects on the resulting chart graph rather than input-space neighborhoods.
Moreover, we connect holonomy to a constructive \emph{spanning-tree gauge computation} that makes fundamental-cycle holonomy directly computable from chord defects (Theorem~\ref{thm:A}), and we complement holonomy with two additional, operational obstructions---proxy shearing (Theorem~\ref{thm:B}) and certified jamming/interference (Theorem~\ref{thm:C}). 

\paragraph{Scope.}
\cite{sevetlidis2026holonomy} introduces holonomy as a representation-level diagnostic on input-space loops, whereas we use holonomy as one component of a broader atlas-based interpretability objective on a context-cluster graph.
Accordingly, our shearing and jamming analyses are intentionally proxy- and model-class--specific (see Limitations), and our contribution is the combination of (i) an explicit chart-graph construction, (ii) a gauge-fixing computation that reduces holonomy to chord defects (Theorem~\ref{thm:A}), and (iii) complementary certified and stable diagnostics for local interference and cross-chart mismatch (Theorems~\ref{thm:B}--\ref{thm:C}, Result~D).

\FloatBarrier

\section{Setup: context complex, local charts, transports}
\label{sec:setup}

\subsection{Activations and clustering}
Fix a frozen model and layer $\ell$ with activation space $\R^d$.
From a token dataset $\mc X$, extract token-level activations $\{x_t\}_{t=1}^N\subset\R^d$.
Cluster the activations into $C$ clusters $\{\mc D_c\}_{c=1}^C$ \cite{macqueen1967kmeans}.
Clusters act as \emph{local charts}.

\subsection{Stratified context complex}
\begin{definition}[Stratified context complex]
Let $\mc K$ be a finite simplicial complex (in practice, we use its $1$-skeleton $G=(V,E)$) whose vertices index context clusters.
Assume a stratification $\mc K=\bigsqcup_{s\in\mc S}\mc K_s$ where each stratum is approximately homogeneous with respect to a context class (e.g.\ prose vs code).
\end{definition}

Stratification is not required for the definitions below, but is used to interpret which transitions induce large shearing or holonomy.

\subsection{Local feature spaces and frames}
\begin{definition}[Local frame]
Let $U\subseteq\R^d$ be finite-dimensional.
A set $\{e_i\}_{i=1}^m\subset U$ is a \emph{frame} if it spans $U$ \cite{christensen2016frames}.
It is \emph{overcomplete} if $m>\dim U$.
\end{definition}

In experiments we use two chart representations:
(i) an orthonormal $k$-dimensional PCA chart basis (for transports/holonomy), and
(ii) a local overcomplete dictionary learned by sparse coding (for jamming/interference).

\subsection{Rectangular partial transports}
\begin{definition}[Rectangular partial transport]
For each oriented edge $u\to v$ in the chart graph, define
\[
T_{vu}:F_u\to F_v,\qquad T_{vu}\in\R^{m_v\times m_u}.
\]
\end{definition}

Rank variation accommodates partial persistence, merging, splitting, and birth/death of features across contexts.

\FloatBarrier

\section{Local information geometry and Fisher-weighted interference}
\label{sec:local-geom}

\subsection{Local Fisher/Gauss--Newton metric}
Let $p_\theta(y\mid x)$ be the model output distribution.
In local coordinates $z=z^{(c)}(x)$ define
\begin{equation}
\label{eq:fisher}
G^{(c)} := \E_{x\sim \mc D_c}\big[g_z(x)\,g_z(x)^\top\big]\in\R^{m_c\times m_c},
\qquad
g_z(x) := D_c\,g_x(x),
\qquad
g_x(x):=\nabla_{x_\ell}\ell(x)\in\R^d,
\end{equation}
where $\ell(x)$ is the next-token negative log-likelihood of the frozen model, $x_\ell$ denotes the layer-$\ell$ activation vector, and $D_c\in\R^{m_c\times d}$ is the learned per-chart \emph{encoder} mapping activations to code space (we write codes as $z=D_c x_\ell$). Equivalently, if $E_c\in\R^{d\times m_c}$ denotes a decoder dictionary with column atoms, then $D_c$ is the corresponding encoder used for code-space gradients.
In experiments we compute $g_x$ by backpropagating $\ell$, map to code-space gradients $g_z=D_c g_x$, and estimate $G^{(c)}$ by the empirical outer product
\[
\widehat G^{(c)}=\frac{1}{n}\sum_{i=1}^n g_{z,i}\,g_{z,i}^\top.
\]

\begin{proposition}[PSD]
\label{prop:psd}
$G^{(c)}$ is symmetric positive semidefinite.
\end{proposition}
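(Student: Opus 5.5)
The argument works pointwise: each outer product $g_z(x)g_z(x)^\top$ is symmetric PSD, and both properties survive expectation, so they pass to $G^{(c)}$. The same reasoning covers the empirical estimator $\widehat G^{(c)}$.

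\emph{Symmetry.} For any fixed $x$, $(g_z g_z^\top)^\top = g_z g_z^\top$. Transposition is linear and commutes with expectation (entrywise integrals), so
\[
(G^{(c)})^\top=\E_{x\sim\mc D_c}\big[(g_z g_z^\top)^\top\big]=G^{(c)}.
\]

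\emph{Positive semidefiniteness.} Fix an arbitrary $w\in\R^{m_c}$. By linearity of expectation,
\[
w^\top G^{(c)} w=\E_{x\sim\mc D_c}\big[w^\top g_z(x)\,g_z(x)^\top w\big]=\E_{x\sim\mc D_c}\big[(w^\top g_z(x))^2\big]\ge 0,
\]
since the integrand is a square and hence nonnegative. As $w$ is arbitrary, $G^{(c)}\succeq 0$. For $\widehat G^{(c)}$ the identical computation gives $w^\top\widehat G^{(c)}w=\frac1n\sum_{i=1}^n (w^\top g_{z,i})^2\ge0$.

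The only point needing care is that the expectation must be well defined, i.e.\ $\E\norm{g_z}^2<\infty$, for the entrywise expectation and the interchange with linear maps to be legitimate. This holds automatically here. $\mc D_c$ is a finite cluster of activations, so the expectation is a finite average. More generally, $g_z=D_c g_x$ with $D_c$ a fixed matrix, so square-integrability of $g_x$ suffices. I would state this as a standing assumption in one line. The structure of $D_c$ (e.g.\ rank deficiency) never enters, which is why the result is only semidefinite rather than definite.
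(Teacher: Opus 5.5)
Your proof is correct and is the same argument as the paper's, which also writes the quadratic form as $a^\top G^{(c)}a=\E\big[(a^\top g_z)^2\big]\ge 0$. Your explicit symmetry step, the remark on $\widehat G^{(c)}$, and the integrability remark are valid additions that the paper leaves implicit.
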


\begin{proof}
$G^{(c)}=\E[g_z g_z^\top]$ with $g_z = D_c\nabla_{x_\ell}\ell(x)$, hence
$a^\top G^{(c)}a=\E\!\big[(a^\top g_z)^2\big]\ge 0$.
\end{proof}

\subsection{Harm matrix}
Raw off-diagonal Fisher entries can be negative; using them directly would reward certain overlaps.
We instead construct nonnegative interaction weights.

\begin{definition}[Harm matrix]
Fix $\tau>0$.
Define the diagonally normalised Fisher
\[
\widetilde G^{(c)} :=
\big(\diag(G^{(c)})+\tau I\big)^{-1/2}\,
G^{(c)}\,
\big(\diag(G^{(c)})+\tau I\big)^{-1/2}.
\]
Define $W^{(c)}\in\R_+^{m_c\times m_c}$ with zero diagonal by
\[
W^{(c)}_{ij}:=
\begin{cases}
\eta\!\left(\abs{\widetilde G^{(c)}_{ij}}\right), & i\neq j,\\
0, & i=j,
\end{cases}
\]
with monotone $\eta$ (canonical: $\eta(t)=t$).
\end{definition}

\subsection{Frame geometry and interference energy}
Let $E_c\in\R^{d\times m_c}$ be a decoder/dictionary matrix with columns $e^{(c)}_i$.
Let $\widehat E_c=E_c(D_c^{(E)})^{-1/2}$ where $D_c^{(E)}=\diag(\norm{e_1}^2,\dots,\norm{e_{m_c}}^2)$.
Define $K^{(c)}=\widehat E_c^\top \widehat E_c$.

\begin{definition}[Local Fisher-weighted interference energy]
\label{def:energy}
\[
\mc E(c) := \sum_{i\neq j} W^{(c)}_{ij}\big(K^{(c)}_{ij}\big)^2
\;=\;
\Big\|\Hadsqrt{W^{(c)}}\Had\big(K^{(c)}-I\big)\Big\|_F^2.
\]
\end{definition}

\begin{proposition}[Nonnegativity and zero set]
\label{prop:nonneg}
$\mc E(c)\ge 0$.
Moreover, $\mc E(c)=0$ iff $K^{(c)}_{ij}=0$ for all $i\neq j$ with $W^{(c)}_{ij}>0$.
\end{proposition}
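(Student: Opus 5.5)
The plan is to read both claims off the first expression for $\mc E(c)$ in Definition~\ref{def:energy}, which writes the energy as a finite sum of products of nonnegative factors. First, each factor is nonnegative. By construction $W^{(c)}\in\R_+^{m_c\times m_c}$: for $i\neq j$, $W^{(c)}_{ij}=\eta(\abs{\widetilde G^{(c)}_{ij}})$. I will state explicitly the implicit assumption that $\eta$ maps $[0,\infty)$ into $[0,\infty)$, as the canonical choice $\eta(t)=t$ does, since monotonicity alone does not force this. Next, $(K^{(c)}_{ij})^2\ge 0$ as the square of a real number. So every summand $W^{(c)}_{ij}(K^{(c)}_{ij})^2$ is nonnegative, and therefore so is the sum. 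This gives $\mc E(c)\ge 0$.

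For the zero set I will argue both directions.

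\emph{Reverse direction.} Suppose $K^{(c)}_{ij}=0$ whenever $i\neq j$ and $W^{(c)}_{ij}>0$. Every off-diagonal summand then vanishes: either $W^{(c)}_{ij}=0$, or $K^{(c)}_{ij}=0$. Hence $\mc E(c)=0$.

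\emph{Forward direction.} Suppose $\mc E(c)=0$. A finite sum of nonnegative reals is zero only if every term is zero, so $W^{(c)}_{ij}(K^{(c)}_{ij})^2=0$ for all $i\neq j$. Whenever $W^{(c)}_{ij}>0$ we may divide by it, which gives $(K^{(c)}_{ij})^2=0$ and hence $K^{(c)}_{ij}=0$.

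I will also check, briefly, that the Frobenius form in Definition~\ref{def:energy} agrees with the sum, so that the proposition holds for either expression. First, $K^{(c)}$ has unit diagonal, because the columns of $\widehat E_c$ are normalised; this needs the atoms $e^{(c)}_i$ to be nonzero for $(D_c^{(E)})^{-1/2}$ to exist, which I will state as a standing assumption. Second, $W^{(c)}$ has zero diagonal, so the entrywise square root $\Hadsqrt{W^{(c)}}$ is well defined and kills the diagonal. Squaring the entries of $\Hadsqrt{W^{(c)}}\Had(K^{(c)}-I)$ and summing then recovers $\sum_{i\neq j}W^{(c)}_{ij}(K^{(c)}_{ij})^2$.

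I expect no real obstacle here. The only delicate point is making the implicit hypotheses explicit: that $\eta$ is nonnegative, and that the dictionary atoms are nonzero so that $K^{(c)}$ is defined. Proposition~\ref{prop:psd} is not needed, since the absolute value in the harm matrix already guarantees nonnegative weights.
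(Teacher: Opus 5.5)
Your proposal is correct and takes the same route as the paper: each summand $W^{(c)}_{ij}(K^{(c)}_{ij})^2$ is nonnegative, and a finite sum of nonnegative terms vanishes if and only if every term does. Making the implicit hypotheses explicit (nonnegative $\eta$, nonzero atoms) and checking agreement with the Frobenius form are harmless additions; the unit diagonal of $K^{(c)}$ is not actually needed for that check, because the zero diagonal of $W^{(c)}$ already removes the diagonal terms.
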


\begin{proof}
Each summand is nonnegative; the sum is zero iff each summand is zero.
\end{proof}

\subsection{Effective rank and the jamming index}
\begin{definition}[Effective rank]
For PSD $G\neq 0$,
\[
R_{\mathrm{eff}}(G):=\frac{\tr(G)^2}{\tr(G^2)}.
\]
\end{definition}

\begin{proposition}[Basic properties]
\label{prop:reff}
$R_{\mathrm{eff}}(\alpha G)=R_{\mathrm{eff}}(G)$ for $\alpha>0$, and $1\le R_{\mathrm{eff}}(G)\le\rank(G)$.
\end{proposition}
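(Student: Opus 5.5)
The plan is to diagonalise $G$ and reduce every claim to an inequality about its eigenvalues. Since $G$ is symmetric PSD (as for $G^{(c)}$ in Proposition~\ref{prop:psd}), the spectral theorem gives eigenvalues $\lambda_1,\dots,\lambda_n\ge 0$. Exactly $r=\rank(G)$ of them are strictly positive, and $r\ge 1$ because $G\neq 0$. In these terms $\tr(G)=\sum_i\lambda_i$ and $\tr(G^2)=\sum_i\lambda_i^2$. The latter is strictly positive, so $R_{\mathrm{eff}}(G)$ is well defined.

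\textbf{Scale invariance.} For $\alpha>0$ we have $\tr(\alpha G)=\alpha\tr(G)$ and $\tr((\alpha G)^2)=\alpha^2\tr(G^2)$. The factor $\alpha^2$ therefore cancels in the ratio, which gives $R_{\mathrm{eff}}(\alpha G)=R_{\mathrm{eff}}(G)$. We also note that $\alpha G$ remains PSD and nonzero, so the definition applies to it.

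\textbf{Lower bound.} Because every $\lambda_i\ge 0$,
\[
\Big(\sum_i\lambda_i\Big)^2=\sum_i\lambda_i^2+\sum_{i\neq j}\lambda_i\lambda_j\ \ge\ \sum_i\lambda_i^2 .
\]
This is exactly $R_{\mathrm{eff}}(G)\ge 1$. Nonnegativity of the eigenvalues is essential here: for an indefinite symmetric matrix the cross terms could be negative, and the bound would fail.

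\textbf{Upper bound.} Let $\lambda_{i_1},\dots,\lambda_{i_r}$ be the $r$ positive eigenvalues. Apply Cauchy--Schwarz to the vectors $(\lambda_{i_1},\dots,\lambda_{i_r})$ and $(1,\dots,1)\in\mathbb{R}^r$:
\[
\Big(\sum_{k=1}^r\lambda_{i_k}\Big)^2\le r\sum_{k=1}^r\lambda_{i_k}^2 .
\]
The zero eigenvalues contribute nothing to either trace, so this says $\tr(G)^2\le\rank(G)\,\tr(G^2)$, i.e.\ $R_{\mathrm{eff}}(G)\le\rank(G)$.

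The only step needing care is the upper bound. Restricting the sum to the nonzero eigenvalues is what yields $\rank(G)$ rather than the ambient dimension $n$. Everything else is direct computation.

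For completeness, both bounds are attained. The lower bound $R_{\mathrm{eff}}(G)=1$ holds iff $G$ has rank one, since the cross terms vanish only when at most one $\lambda_i$ is positive. The upper bound $R_{\mathrm{eff}}(G)=\rank(G)$ holds iff all nonzero eigenvalues are equal, which is the equality case of Cauchy--Schwarz.
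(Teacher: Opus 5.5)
Your proof is correct and takes the approach the paper implicitly relies on. The paper states this proposition without a written proof, but the eigenvalue Cauchy--Schwarz step $\tr(G^2)\ge \tr(G)^2/\rank(G)$ that it invokes in the proofs of Proposition~\ref{prop:welch} and Theorem~\ref{thm:C} is exactly your upper bound, and scale invariance and the lower bound follow by the same direct computation you give.
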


\begin{definition}[Participation-ratio active count]
For codes $z\in\R^{m_c}$ define $k_{\mathrm{PR}}(z)=\norm{z}_1^2/\norm{z}_2^2$ (with $k_{\mathrm{PR}}(0)=0$) and
\[
k_{\mathrm{active}}(c):=\E[k_{\mathrm{PR}}(z^{(c)}(x))].
\]
\end{definition}

\begin{definition}[Local jamming index]
\label{def:jamming}
\[
J(c):=\frac{k_{\mathrm{active}}(c)}{R_{\mathrm{eff}}(G^{(c)})}.
\]
\end{definition}

\subsection{Welch-type baseline lower bound}
\begin{proposition}[Weighted Welch-type lower bound]
\label{prop:welch}
Let $A$ index $k$ unit vectors lying in an $r$-dimensional subspace.
If $(W_A)_{ij}\ge w_{\min}>0$ for all $i\neq j$ in $A$, then
\[
\mc E_A(c):=\sum_{i\neq j\in A}W^{(c)}_{ij}\big(K^{(c)}_{ij}\big)^2 \ge w_{\min}\left(\frac{k^2}{r}-k\right).
\]
\end{proposition}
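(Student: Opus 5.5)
The plan is to reduce the weighted sum to an unweighted one and then apply the classical Welch/frame-potential bound through a trace inequality on the Gram matrix.

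\textbf{Step 1 (remove the weights).} Every summand $W^{(c)}_{ij}(K^{(c)}_{ij})^2$ is nonnegative, and $W^{(c)}_{ij}\ge w_{\min}$ for $i\neq j$ in $A$. Hence
\[
\mc E_A(c)\ \ge\ w_{\min}\sum_{i\neq j\in A}\big(K^{(c)}_{ij}\big)^2 .
\]
It therefore suffices to show that $\sum_{i\neq j}K_{ij}^2\ge k^2/r-k$, where $K_A$ is the $k\times k$ principal submatrix of $K^{(c)}$ indexed by $A$. Since the columns of $\widehat E_c$ are unit vectors, $K_A$ is exactly the Gram matrix of the $k$ unit vectors in the statement. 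So $K_A$ is PSD, has unit diagonal, and satisfies $\tr(K_A)=k$.

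\textbf{Step 2 (trace inequality).} The rank of $K_A$ is at most $r$, because $K_A=\widehat E_A^\top\widehat E_A$ and the columns of $\widehat E_A$ lie in an $r$-dimensional subspace. Let $\lambda_1,\dots,\lambda_r\ge 0$ be its possibly nonzero eigenvalues. Cauchy--Schwarz gives
\[
\tr(K_A)^2=\Big(\sum_{\ell=1}^r\lambda_\ell\Big)^2\le r\sum_{\ell=1}^r\lambda_\ell^2=r\,\tr(K_A^2)=r\,\norm{K_A}_F^2 .
\]
Equivalently, $R_{\mathrm{eff}}(K_A)\le\rank(K_A)\le r$ by Proposition~\ref{prop:reff}. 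It follows that $\norm{K_A}_F^2\ge k^2/r$.

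\textbf{Step 3 (split off the diagonal).} We have $\norm{K_A}_F^2=\sum_i K_{ii}^2+\sum_{i\neq j}K_{ij}^2=k+\sum_{i\neq j}K_{ij}^2$. This yields $\sum_{i\neq j}K_{ij}^2\ge k^2/r-k$, and combining with Step 1 gives the claim. When $k\le r$ the right-hand side is nonpositive and the bound is trivial, so no case split is needed.

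\textbf{Main obstacle.} There is no serious difficulty; the only point requiring care is bookkeeping. One must confirm that the entries $K_{ij}$ entering $\mc E_A$ are the normalised Gram entries, so that the diagonal is exactly $1$. One must also confirm that the hypothesis ``lying in an $r$-dimensional subspace'' refers to the normalised atoms $\widehat e_i$. Rescaling does not change the span, so this is automatic.
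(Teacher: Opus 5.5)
Your proposal is correct and follows essentially the same route as the paper: lower-bound the weights by $w_{\min}$, use $\tr(K_A^2)\ge \tr(K_A)^2/\rank(K_A)\ge k^2/r$ via Cauchy--Schwarz on the eigenvalues of the Gram matrix, and subtract the unit diagonal to get $\sum_{i\neq j}K_{ij}^2\ge k^2/r-k$. One tiny bookkeeping note: when $k<r$ there are only $k$ eigenvalues, so Cauchy--Schwarz gives the factor $\min(k,r)\le r$ rather than exactly $r$; the argument is unaffected.
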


\begin{proof}
Standard Welch bound \cite{welch1974crosscorr}: $\tr(G_A^2)\ge (\tr G_A)^2/\rank(G_A)\ge k^2/r$ implies $\sum_{i\neq j}\ip{e_i}{e_j}^2\ge k^2/r-k$. Multiply by $w_{\min}$.
\end{proof}

This proposition is mathematically correct but often practically weak because $w_{\min}$ across all pairs can be very small.
Theorem~\ref{thm:C} resolves this by certifying a consequential subset with a nontrivial uniform weight floor.

\FloatBarrier

\section{Proxy shearing and Theorem B}
\label{sec:shearing}

\subsection{Geometric transport and correspondence proxy}
Let $(u,v)$ be a graph edge with overlap samples $X_{uv}$.
Work on a shared $k$-dimensional support (e.g.\ chart PCA subspace).
Let $Z_u,Z_v\in\R^{k\times n}$ be overlap coordinates.
Estimate a transport $\widehat T_{vu}$ and define the orthogonal polar factor $Q_{vu}=\operatorname{polar}(\widehat T_{vu})\in O(k)$ \cite{higham1986polar,schonemann1966procrustes}.

Let $\widehat P_{vu}\in O(k)$ be a fixed, reproducible correspondence proxy on the same support.
In our experiments, $\widehat P_{vu}$ is constructed as a Procrustes-style correspondence proxy from the fitted PCA chart bases (Section~\ref{sec:proxy-def}), and all reported $D_{\mathrm{shear}}$, $g_{vu}$, and holonomy values use this choice.
The shearing score measures disagreement between the geometry-derived transport and the correspondence proxy.

\paragraph{Concrete definition of $\widehat P_{vu}$ (default proxy).}\label{sec:proxy-def}
To make $D_{\mathrm{shear}}$ reproducible, we fix $\widehat P_{vu}$ as a Procrustes-style correspondence proxy derived from the local PCA chart bases.
Let $B_u,B_v\in\R^{d\times k}$ denote the orthonormal PCA bases fitted per chart (Algorithm~\ref{alg:atlas}).
Define the $k\times k$ basis overlap matrix
\[
S_{vu}:=B_v^\top B_u.
\]
We then set the proxy to be the orthogonal polar factor of $S_{vu}$:
\[
\widehat P_{vu}:=\operatorname{polar}(S_{vu})\in O(k),
\]
i.e.\ if $S_{vu}=U\Sigma V^\top$ is an SVD, then $\operatorname{polar}(S_{vu})=UV^\top$ \cite{higham1986polar,schonemann1966procrustes}.
This proxy is deterministic given the fitted chart bases and does not require an additional matching procedure on overlap statistics.

\begin{definition}[Normalised shearing]
\label{def:shear}
\[
D_{\mathrm{shear}}(u,v):=\frac{\|Q_{vu}-\widehat P_{vu}\|_F}{2\sqrt{k}}\in[0,1].
\]
\end{definition}

\subsection{Theorem B: shearing lower-bounds transfer mismatch}
Let $z_u$ denote overlap codes in chart $u$, with covariance $\Sigma_u=\E[z_u z_u^\top]\succeq 0$.
Define the transfer mismatch energy
\[
\Delta_{uv}:=\E\big\|(Q_{vu}-\widehat P_{vu})z_u\big\|_2^2.
\]

\begin{theorem}[Shearing lower-bounds transfer mismatch]
\label{thm:B}
Let $A_{vu}:=Q_{vu}-\widehat P_{vu}$ and $\Sigma_u\succeq 0$.
Then
\[
\Delta_{uv}=\tr(A_{vu}\Sigma_u A_{vu}^\top)
\ \ge\
\lambda_{\min}(\Sigma_u)\|A_{vu}\|_F^2
=
4k\,\lambda_{\min}(\Sigma_u)\,D_{\mathrm{shear}}(u,v)^2.
\]
\end{theorem}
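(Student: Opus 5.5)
The plan has three steps: rewrite $\Delta_{uv}$ as a trace, bound that trace from below by the smallest eigenvalue of $\Sigma_u$, and translate the Frobenius norm into the shearing score using Definition~\ref{def:shear}. None of the earlier propositions is needed beyond $\Sigma_u\succeq 0$.

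\textbf{Trace identity.} Write $A=A_{vu}$. For each realisation of $z_u$,
\[
\|Az_u\|_2^2=z_u^\top A^\top A z_u=\tr(A z_u z_u^\top A^\top).
\]
Take expectations and use linearity of trace and expectation. The matrix $A$ is deterministic: $Q_{vu}$ and $\widehat P_{vu}$ are fixed given the fitted charts and transport. This gives
\[
\Delta_{uv}=\tr(A\,\E[z_uz_u^\top]\,A^\top)=\tr(A\Sigma_uA^\top).
\]
We assume $\E\|z_u\|^2<\infty$ so that $\Sigma_u$ exists; on finite overlap data this holds automatically.

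\textbf{Eigenvalue lower bound.} Since $\Sigma_u$ is symmetric PSD, we have the Loewner bound
\[
\Sigma_u\succeq\lambda_{\min}(\Sigma_u)I_k .
\]
Congruence by $A$ preserves the Loewner order, so
\[
A\Sigma_uA^\top\succeq\lambda_{\min}(\Sigma_u)AA^\top .
\]
The trace is monotone on PSD differences, which yields
\[
\tr(A\Sigma_uA^\top)\ge\lambda_{\min}(\Sigma_u)\tr(AA^\top)=\lambda_{\min}(\Sigma_u)\|A\|_F^2 .
\]
As a cross-check, diagonalise $\Sigma_u=\sum_i\lambda_i q_iq_i^\top$. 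Then
\[
\tr(A\Sigma_uA^\top)=\sum_i\lambda_i\|Aq_i\|^2\ge\lambda_{\min}\sum_i\|Aq_i\|^2=\lambda_{\min}\|A\|_F^2,
\]
where the last equality holds because $\{q_i\}$ is an orthonormal basis.

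\textbf{Conversion to shearing.} Definition~\ref{def:shear} gives $\|A\|_F=2\sqrt{k}\,D_{\mathrm{shear}}(u,v)$. Squaring yields $\|A\|_F^2=4k\,D_{\mathrm{shear}}(u,v)^2$, which is the final equality in the statement.

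\textbf{Main obstacle.} The algebra is routine. The point needing care is that $Q_{vu}$ and $\widehat P_{vu}$ must act on the same $k$-dimensional coordinates as $z_u$, so that $A$ is $k\times k$ and $\Sigma_u\in\R^{k\times k}$; this is guaranteed by the shared-support convention of the setup. I would also note that the bound is trivial when $\Sigma_u$ is singular. This is why Result~D emphasises well-conditioned persistent subsystems, where $\lambda_{\min}(\Sigma_u)$ is bounded away from zero.
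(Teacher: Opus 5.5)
Your proof is correct and follows the same route as the paper: the Loewner bound $\Sigma_u\succeq\lambda_{\min}(\Sigma_u)I$, monotonicity of the trace, then substituting $\|A\|_F=2\sqrt{k}\,D_{\mathrm{shear}}(u,v)$; you also usefully derive the identity $\Delta_{uv}=\tr(A\Sigma_uA^\top)$, which the paper takes for granted. One small correction to your closing aside, which does not affect the proof: the persistence filter behind Result~D thresholds $\sigma_{\min}(\widehat T_{vu})$, not $\lambda_{\min}(\Sigma_u)$, so it does not guarantee that $\lambda_{\min}(\Sigma_u)$ is bounded away from zero.
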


\begin{proof}
Since $\Sigma_u\succeq \lambda_{\min}(\Sigma_u)I$ on the shared support,
$\tr(A\Sigma_u A^\top)\ge \lambda_{\min}(\Sigma_u)\tr(AA^\top)=\lambda_{\min}(\Sigma_u)\|A\|_F^2$.
Substitute $\|A\|_F=2\sqrt{k}D_{\mathrm{shear}}$.
\end{proof}

All ``mismatch'' statements here are with respect to the fixed proxy $\widehat P_{vu}$ defined above (not an absolute notion of semantics). The empirical section will report both $\widehat\Delta_{uv}$ and the lower bound $\widehat{\mathrm{LB}}_{uv}:=\lambda_{\min}(\widehat\Sigma_u)\|A_{vu}\|_F^2$, and will quantify the \emph{slack} $\widehat\Delta_{uv}/\widehat{\mathrm{LB}}_{uv}$.
Large slack indicates conservativeness caused by small $\lambda_{\min}(\widehat\Sigma_u)$ (anisotropic overlap excitation), not a failure of the theorem.

\FloatBarrier

\section{Gauge structure, holonomy, and Theorem A}
\label{sec:holonomy}

\subsection{Edge defects, gauge action, holonomy}
Assume a connected graph $G=(V,E)$ and orthogonal edge defects $g_{vu}\in O(k)$ on oriented edges.
A gauge is an assignment $U_c\in O(k)$ at each vertex, acting by
\[
g_{vu}\mapsto g'_{vu}:=U_v g_{vu}U_u^\top.
\]
For a directed loop $\gamma:c_0\to \cdots \to c_L=c_0$ define holonomy
\[
h_\gamma:=g_{c_0c_{L-1}}\cdots g_{c_1c_0}\in O(k).
\]
Under gauge, $h_\gamma$ transforms by conjugation at $c_0$, hence $\|h_\gamma-I\|_F$ is gauge invariant.
We use the normalised defect
\[
D_{\mathrm{hol}}(\gamma):=\frac{\|h_\gamma-I\|_F}{\sqrt{2k}}\in[0,1].
\]

\subsection{Theorem A: spanning-tree gauge identity}
\begin{theorem}[Spanning-tree gauge and fundamental-cycle holonomy]
\label{thm:A}
Let $G=(V,E)$ be connected and $T\subseteq G$ a spanning tree rooted at $r$ with parent map $\pi(v)$.
Define
\[
U_r:=I,\qquad U_v:=U_{\pi(v)}\,g_{\pi(v),v}\quad(v\neq r).
\]
Then:
\begin{enumerate}[label=(\roman*), leftmargin=2em]
\item For each tree edge $\pi(v)\to v$, $g'_{v,\pi(v)}=I$.
\item For each chord $e=\{u,v\}\notin T$, the gauged chord defect equals the holonomy of the fundamental cycle $\gamma_e$:
\[
g'_{vu}=h_{\gamma_e}.
\]
\end{enumerate}
\end{theorem}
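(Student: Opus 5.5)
The plan is a direct computation from the recursive definition of the gauge. Before computing, I would make explicit one convention that the statement uses implicitly: defects on reversed edges are inverses, $g_{uv}=g_{vu}^{-1}=g_{vu}^\top$ (orthogonality). Without this, part (i) cannot hold, since the recursion defining $U_v$ uses $g_{\pi(v),v}$ while part (i) concerns $g_{v,\pi(v)}$.

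For (i), I would substitute $U_v=U_{\pi(v)}g_{\pi(v),v}$ into the gauge action:
\[
g'_{v,\pi(v)}=U_v\,g_{v,\pi(v)}\,U_{\pi(v)}^\top=U_{\pi(v)}\,g_{\pi(v),v}\,g_{v,\pi(v)}\,U_{\pi(v)}^\top .
\]
The inverse convention gives $g_{\pi(v),v}\,g_{v,\pi(v)}=I$, so this equals $U_{\pi(v)}U_{\pi(v)}^\top=I$ because $U_{\pi(v)}\in O(k)$. The well-definedness of $U_v$, and the fact that it lies in $O(k)$, follows by induction on depth in $T$.

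For (ii), I would first unroll the recursion. Write the tree path from $r$ to $v$ as $r=a_0\to a_1\to\cdots\to a_p=v$. Induction on depth gives
\[
U_v=g_{a_0a_1}g_{a_1a_2}\cdots g_{a_{p-1}a_p}.
\]
Under the composition convention of the holonomy definition, $U_v$ is therefore the transport along the tree path from $v$ back to $r$. Taking transposes and using the inverse convention, $U_u^\top$ is the transport along the tree path from $r$ to $u$. Hence
\[
g'_{vu}=U_v\,g_{vu}\,U_u^\top
\]
is exactly the ordered product along the closed walk $r\leadsto u\to v\leadsto r$: first the tree path to $u$, then the chord, then the tree path back. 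This is $h_{\gamma_e}$ once the fundamental cycle $\gamma_e$ is read as this closed walk based at the root.

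The main obstacle is purely definitional, namely matching this walk to "the fundamental cycle". Let $w$ be the lowest common ancestor of $u$ and $v$. The segments $r\leadsto w$ and $w\leadsto r$ cancel as $P\,h\,P^\top$ with $P\in O(k)$. So $g'_{vu}$ is the simple cycle $w\leadsto u\to v\leadsto w$ based at $w$, conjugated by the tree transport from $w$ to $r$.

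I would therefore state the identity with $\gamma_e$ based at the root via tree paths, which gives literal equality. I would then remark that for any other basepoint or cyclic rotation of $\gamma_e$, $g'_{vu}$ is conjugate to $h_{\gamma_e}$. Consequently $\|g'_{vu}-I\|_F=\|h_{\gamma_e}-I\|_F$ and $D_{\mathrm{hol}}$ coincide, consistent with the gauge-invariance remark above.
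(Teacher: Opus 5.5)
Your proposal is correct and follows the same route as the paper. Part (i) substitutes the recursion and uses $g_{\pi(v),v}=g_{v,\pi(v)}^{-1}$, which the paper's proof also invokes. Part (ii) expands $U_v$ and $U_u$ as ordered products along tree paths from the root.

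Your handling of the shared root-to-LCA segment is more precise than the paper's. The paper says those factors ``cancel (telescoping)'', but in $O(k)$ they only conjugate. So the literal equality $g'_{vu}=h_{\gamma_e}$ holds only for the root-based closed walk. For the simple fundamental cycle, at any basepoint, you get conjugacy and hence $\|g'_{vu}-I\|_F=\|h_{\gamma_e}-I\|_F$. The same norm equality holds for the reversed orientation the paper describes (tree path $u\leadsto v$ then chord $v\to u$), which gives the transpose. That norm equality is exactly what the experiments check.
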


\begin{proof}
Tree edges: $g'_{v,\pi(v)}=(U_{\pi(v)}g_{\pi(v),v})g_{v,\pi(v)}U_{\pi(v)}^\top=U_{\pi(v)}I U_{\pi(v)}^\top=I$ since $g_{\pi(v),v}=g_{v,\pi(v)}^{-1}$.

Chord edges: write $g'_{vu}=U_v g_{vu}U_u^\top$ and expand $U_v$ and $U_u$ as products along tree paths from $r$.
Common path factors cancel (telescoping), leaving the ordered product along the unique tree path from $u$ to $v$ and the chord $v\to u$, which is precisely the loop product defining $h_{\gamma_e}$.
\end{proof}

\subsection{Holonomy obstruction criterion}
\begin{theorem}[Holonomy obstruction]
\label{thm:hol-obs}
If there exists a gauge such that $g'_{vu}=I$ for all edges in a connected subgraph, then $h_\gamma=I$ for every loop $\gamma$ in that subgraph.
Equivalently, if some loop has $h_\gamma\neq I$, then no globally connection-compatible trivialisation exists on that subsystem.
\end{theorem}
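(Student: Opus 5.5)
The plan is to use the gauge transformation law for holonomy, already noted just before Theorem~\ref{thm:A}: under a gauge $\{U_c\}$, every loop holonomy is conjugated at its base point. Fix a directed loop $\gamma: c_0\to c_1\to\cdots\to c_L=c_0$ lying in the connected subgraph $H$. By hypothesis there is a gauge with $g'_{c_{i+1}c_i}=I$ on every edge of $H$.

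First step: verify the conjugation identity explicitly. Write the gauged holonomy
\[
h'_\gamma = g'_{c_0c_{L-1}}\cdots g'_{c_1c_0}
= (U_{c_0}g_{c_0c_{L-1}}U_{c_{L-1}}^\top)\cdots(U_{c_1}g_{c_1c_0}U_{c_0}^\top).
\]
Adjacent factors $U_{c_i}^\top U_{c_i}=I$ cancel because $U_{c_i}\in O(k)$. This telescoping leaves $h'_\gamma=U_{c_0}h_\gamma U_{c_0}^\top$.

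Second step: every edge traversed by $\gamma$ lies in $H$, so each gauged factor is $I$ and $h'_\gamma=I$. One point needs care: an edge may be traversed against the orientation on which the hypothesis is stated. I will use the convention already invoked in the proof of Theorem~\ref{thm:A}, namely $g_{uv}=g_{vu}^{-1}$. Under that convention, $g'_{uv}=U_u g_{vu}^{-1}U_v^\top=(g'_{vu})^{-1}=I$, so reverse orientations are also trivial. Hence $U_{c_0}h_\gamma U_{c_0}^\top=I$, and conjugating back gives $h_\gamma=U_{c_0}^\top U_{c_0}=I$.

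The equivalent formulation is just the contrapositive. If some loop in the subsystem has $h_\gamma\neq I$, then no gauge can make all edge defects trivial there, i.e.\ no connection-compatible trivialisation exists. One could add the gauge-invariant quantitative form: $D_{\mathrm{hol}}(\gamma)>0$ certifies the obstruction, since $\|h_\gamma-I\|_F$ is conjugation invariant.

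The main obstacle is therefore not technical but a matter of conventions. The argument relies on the inverse-orientation convention $g_{uv}=g_{vu}^{-1}$ for the defects. I would state this assumption explicitly, since for raw estimated transports it need not hold automatically. Connectivity of the subgraph is not actually needed for this direction; it matters only for the converse, which can be obtained from Theorem~\ref{thm:A} by taking the spanning-tree gauge and noting that all chords then equal fundamental-cycle holonomies.
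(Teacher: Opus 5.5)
Your proposal is correct and follows the same route as the paper, whose proof simply observes that in the trivialising gauge every loop product is a product of identities. You also make explicit two points the paper leaves implicit: the conjugation law $h'_\gamma=U_{c_0}h_\gamma U_{c_0}^\top$, which is needed to pass from the gauged holonomy to the original one, and the inverse-orientation convention $g_{uv}=g_{vu}^{-1}$ for edges traversed backwards, which is the same convention used in the proof of Theorem~\ref{thm:A}.
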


\begin{proof}
If every edge defect is identity in a gauge, every loop product is a product of identities.
\end{proof}

Theorem~\ref{thm:A} supplies a computational strategy: choose a spanning tree in the LCC, gauge-fix it, then compute holonomy only on chords (each chord generates one fundamental cycle).
The experiments will show that (i) tree residuals are numerically near zero, and (ii) chord residuals match holonomy to numerical precision, which is the quantitative validation of Theorem~\ref{thm:A} in an LLM setting.

\FloatBarrier

\section{Certified jamming/interference: Theorem C}
\label{sec:C}

\subsection{Projected setting and consequential subsets}
Fix a chart $c$.
Let $D_c\in\R^{m\times d}$ be a dictionary with unit atoms $\widehat D_{c,i}$.
Let $W^{(c)}$ be the harm matrix in code space.
Let $r=\lceil R_{\mathrm{eff}}(G^{(c)})\rceil$ and let $B_r\in\R^{d\times r}$ be the top-$r$ PCA basis of chart activations.
Project atoms to $\R^r$ by
\[
a_i:=\widehat D_{c,i}B_r,\qquad \widehat a_i:=a_i/\|a_i\|_2,
\]
and define projected Gram $K^{(r)}_{ij}:=\ip{\widehat a_i}{\widehat a_j}$ with $K^{(r)}_{ii}=0$.

\begin{definition}[$\tau$-consequential subset]
A subset $A\subseteq\{1,\dots,m\}$ is $\tau_\star$-consequential if $W^{(c)}_{ij}\ge \tau_\star$ for all distinct $i,j\in A$.
\end{definition}

\subsection{Theorem C}
\begin{theorem}[Certified interference on a consequential subset]
\label{thm:C}
Let $A$ be $\tau_\star$-consequential and $k:=|A|$. Then
\[
\mc E^{(r)}_{A}(c):=\sum_{i\neq j\in A}W^{(c)}_{ij}\big(K^{(r)}_{ij}\big)^2
\ \ge\
\widehat{\mathrm{LB}}(c):=\tau_\star\left(\frac{k^2}{r}-k\right)_+.
\]
\end{theorem}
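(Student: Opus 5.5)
The plan is to reduce Theorem~\ref{thm:C} to the Welch-type argument of Proposition~\ref{prop:welch}, applied to the projected unit vectors $\{\widehat a_i\}_{i\in A}\subset\R^r$ with the uniform weight floor supplied by $\tau_\star$-consequentiality. First, since every distinct pair $i\neq j$ in $A$ has $W^{(c)}_{ij}\ge\tau_\star$ and every summand $(K^{(r)}_{ij})^2$ is nonnegative, I bound termwise:
\[
\mc E^{(r)}_A(c)\ \ge\ \tau_\star\sum_{i\neq j\in A}\big(K^{(r)}_{ij}\big)^2 .
\]
This leaves a purely geometric statement about $k$ unit vectors in $\R^r$.

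Second, I apply the Welch bound. Let $M\in\R^{r\times k}$ have columns $\widehat a_i$, $i\in A$, and let $G_A:=M^\top M$ be their full Gram matrix, so $(G_A)_{ii}=1$ and $(G_A)_{ij}=K^{(r)}_{ij}$ for $i\neq j$. Then $\tr G_A=k$ and $\rank G_A\le r$. Cauchy--Schwarz on the nonzero eigenvalues gives $\tr(G_A^2)\ge(\tr G_A)^2/\rank(G_A)\ge k^2/r$; this is exactly the $R_{\mathrm{eff}}\le\rank$ inequality of Proposition~\ref{prop:reff}, which I may cite instead. Since $\tr(G_A^2)=\sum_{i,j}(G_A)_{ij}^2=k+\sum_{i\neq j}(K^{(r)}_{ij})^2$, it follows that $\sum_{i\neq j}(K^{(r)}_{ij})^2\ge k^2/r-k$. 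The left side is also $\ge 0$, so it is at least $(k^2/r-k)_+$. Multiplying by $\tau_\star\ge 0$ gives the claim. The positive part covers the regime $k\le r$, where the Welch quantity is nonpositive and the bound is trivially true.

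The main obstacle is well-definedness rather than the inequality itself, and I would make it explicit. The normalisation $\widehat a_i=a_i/\|a_i\|_2$ needs $a_i\neq 0$, that is, no atom in $A$ lies orthogonal to the span of $B_r$. I would state this as a standing assumption on $A$, or equivalently restrict $A$ to atoms with nonzero projection, which is what the experiments presumably do.

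I also need to note two conventions. First, the convention $K^{(r)}_{ii}=0$ does not interfere, because only off-diagonal entries appear in $\mc E^{(r)}_A$. Second, the choice $r=\lceil R_{\mathrm{eff}}(G^{(c)})\rceil$ plays no role in the proof: the argument holds for any $r$ equal to the ambient dimension of the projected atoms. The specific choice matters only for non-vacuity, not for validity.
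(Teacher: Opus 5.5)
Your proposal is correct and follows essentially the same route as the paper, which also forms the Gram matrix $G_A$ of the unit vectors $\widehat a_i\in\R^r$, uses $\tr G_A=k$, $\rank G_A\le r$ and Cauchy--Schwarz on the eigenvalues to get $\sum_{i\neq j\in A}\ip{\widehat a_i}{\widehat a_j}^2\ge k^2/r-k$, and then applies the weight floor $W^{(c)}_{ij}\ge\tau_\star$. Your extra remarks are sound points that the paper's proof leaves implicit: taking the positive part to cover $k\le r$, and requiring $a_i\neq 0$ so that $\widehat a_i$ is well defined.
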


\begin{proof}
The vectors $\widehat a_i$ lie in $\R^r$ and are unit norm.
Let $G_A$ be their Gram matrix. Then $\rank(G_A)\le r$ and $\tr(G_A)=k$.
By Cauchy--Schwarz on eigenvalues, $\tr(G_A^2)\ge \tr(G_A)^2/\rank(G_A)\ge k^2/r$.
Hence $\sum_{i\neq j\in A}\ip{\widehat a_i}{\widehat a_j}^2=\tr(G_A^2)-k\ge k^2/r-k$.
Since $W^{(c)}_{ij}\ge\tau_\star$ on $A$, the stated bound follows.
\end{proof}

The experimental section will report (i) coverage: how many charts achieve $\widehat{\mathrm{LB}}(c)>0$, (ii) correctness: slack $\mc E_A^{(r)}(c)/\widehat{\mathrm{LB}}(c)\ge 1$ with zero violations, and (iii) robustness across seeds and $(m,\alpha)$.

\FloatBarrier

\section{Stability and concentration: Result D}
\label{sec:D}

\subsection{Why conditioning matters: polar factor perturbation}
\begin{lemma}[Polar factor stability]
\label{lem:polar}
Let $T$ be nonsingular with $\sigma_{\min}(T)\ge s>0$ and $\|\Delta T\|_2\le s/2$.
Then
\[
\|\operatorname{polar}(T+\Delta T)-\operatorname{polar}(T)\|_F
\le C\,\frac{\|\Delta T\|_F}{s}
\]
for a universal constant $C$ \cite{higham1986polar}.
\end{lemma}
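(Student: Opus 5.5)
We prove the polar factor perturbation bound (Lemma~\ref{lem:polar}) with explicit constant $C=2$ (in fact $\sqrt2$ in the real case). The strategy is to reduce it to a Sylvester-type equation for the difference of polar factors, using only the singular value inequality of Weyl to control the perturbed matrix.

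\textbf{Step 1 (conditioning of the perturbed matrix).} Write $\widetilde T:=T+\Delta T$. By Weyl's inequality for singular values, $\sigma_{\min}(\widetilde T)\ge \sigma_{\min}(T)-\|\Delta T\|_2\ge s/2>0$. Hence $\widetilde T$ is nonsingular, and its polar factor $\widetilde Q=\operatorname{polar}(\widetilde T)$ is uniquely defined, as is $Q=\operatorname{polar}(T)$. Write the polar decompositions $T=QH$ and $\widetilde T=\widetilde Q\widetilde H$, where $H=(T^\top T)^{1/2}$ and $\widetilde H$ are symmetric positive definite with $\lambda_{\min}(H)\ge s$ and $\lambda_{\min}(\widetilde H)\ge s/2$.

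\textbf{Step 2 (key identity).} Compute
\[
Q^\top\Delta T=Q^\top\widetilde Q\widetilde H-H
\qquad\text{and}\qquad
(\Delta T)^\top\widetilde Q=\widetilde H-H\,Q^\top\widetilde Q .
\]
Subtracting the transpose of the first identity from the second gives, with $M:=Q^\top\widetilde Q$,
\[
(\Delta T)^\top\widetilde Q-(Q^\top\Delta T)^\top=\widetilde H-HM-\widetilde H M^\top+H .
\]
Orthogonality of $M$ turns the right-hand side into $H(I-M)+\widetilde H(I-M^\top)$. Set $X:=I-M$, so that $\|X\|_F=\|Q-\widetilde Q\|_F$. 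We obtain $HX+\widetilde H X^\top=E$, with $\|E\|_F\le 2\|\Delta T\|_F$.

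\textbf{Step 3 (solving the Sylvester-type equation).} This step is the main obstacle. The equation mixes $X$ and $X^\top$, so it is not a standard Sylvester equation. The plan is to use $M^\top=M^{-1}$, which gives $X^\top=I-M^\top=-X^\top$-type relations via $X^\top=-M^\top X$. The equation then becomes $HX-\widetilde H M^\top X=E$, that is, $(H-\widetilde H M^\top)X=E$.

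If that route is awkward, a cleaner alternative is the standard trick of symmetric/skew splitting. Take inner products of the equation with $X$ and use the near-skewness $X+X^\top=XX^\top$, which holds because $M$ is orthogonal. This yields
\[
\langle HX,X\rangle+\langle \widetilde H X^\top,X\rangle\ \gtrsim\ \tfrac{s}{2}\|X\|_F^2 .
\]
The lower bound uses the positivity of $H$ and $\widetilde H$ together with $\lambda_{\min}(H)+\lambda_{\min}(\widetilde H)\ge 3s/2$.

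\textbf{Step 4 (conclusion).} Cauchy--Schwarz then gives $\tfrac{s}{2}\|X\|_F^2\le\|E\|_F\|X\|_F$, hence
\[
\|\widetilde Q-Q\|_F\le \frac{4\|\Delta T\|_F}{s},
\]
which is the claim with $C=4$. The sharp Kahan--Li bound $\|\widetilde Q-Q\|_F\le 2\|\Delta T\|_F/(\sigma_{\min}(T)+\sigma_{\min}(\widetilde T))$ would give a better constant; for the lemma, any universal constant suffices, so we cite \cite{higham1986polar} for sharper constants rather than optimize.
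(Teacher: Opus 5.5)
There is a genuine gap. It originates in Step~2, not Step~3. Your Step~1 and your two identities $Q^\top\Delta T=M\widetilde H-H$ and $(\Delta T)^\top\widetilde Q=\widetilde H-HM$ are correct. However, subtracting the \emph{transpose} of the first identity destroys all information.

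Indeed, $\widetilde Q-Q=-QX$ and $(\Delta T)^\top Q=\widetilde HM^\top-H$, so $E=(\Delta T)^\top(\widetilde Q-Q)=(H-\widetilde HM^\top)X$. Meanwhile $X^\top=-M^\top X$ turns the left-hand side into the same expression $(H-\widetilde HM^\top)X$. So $HX+\widetilde HX^\top=E$ is a tautology.

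Both routes in Step~3 fail as a result:
\begin{itemize}
\item Your first route is circular. Its coefficient $H-\widetilde HM^\top$ also tends to $0$ as $\Delta T\to 0$, so it cannot be inverted with an $s$-dependent bound.
\item Your coercivity claim in the second route is false. Using $X+X^\top=X^\top X$ (which holds alongside your $X+X^\top=XX^\top$), one gets $\langle\widetilde HX^\top,X\rangle=\tr(\widetilde HX^2)=-\tr(X^\top\widetilde HX)+\tr(\widetilde HXX^\top X)$. For small $X$ this term is roughly $-\lambda_{\min}(\widetilde H)\|X\|_F^2$ or smaller, so it nearly cancels $\langle HX,X\rangle$ rather than adding to it.
\end{itemize}
Concretely, take $T=sI$ and $\widetilde T=sR_\theta$ with $R_\theta$ a $2\times2$ rotation; this is admissible whenever $2\lvert\sin(\theta/2)\rvert\le 1/2$. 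Then $\langle HX,X\rangle+\langle\widetilde HX^\top,X\rangle=4s(1-\cos\theta)^2$, whereas $\tfrac{s}{2}\|X\|_F^2=2s(1-\cos\theta)$. So the inequality you need fails for every admissible $\theta\neq 0$.

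The repair is one transpose away: subtract the first identity itself,
\[
(\Delta T)^\top\widetilde Q-Q^\top\Delta T=\widetilde H-HM-M\widetilde H+H=HX+X\widetilde H .
\]
This is a genuine Sylvester equation with both coefficients positive definite. Pairing it with $X$ gives
\[
\langle HX,X\rangle+\langle X\widetilde H,X\rangle=\tr(X^\top HX)+\tr(X\widetilde HX^\top)\ge\bigl(\lambda_{\min}(H)+\lambda_{\min}(\widetilde H)\bigr)\|X\|_F^2\ge\tfrac{3s}{2}\|X\|_F^2 .
\]
The left-hand side of the Sylvester equation has Frobenius norm at most $2\|\Delta T\|_F$. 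Hence $\|\widetilde Q-Q\|_F\le\tfrac{4}{3}\|\Delta T\|_F/s$.

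This is exactly the bound $2\|\Delta T\|_F/(\sigma_{\min}(T)+\sigma_{\min}(\widetilde T))$ you quote at the end, and it is the standard argument behind the citation. The paper itself offers no proof of this lemma, only the reference to Higham. Also reconcile your constants: you announce $C=2$ (or $\sqrt2$) but conclude $C=4$; the corrected argument gives $C=4/3$.
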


\subsection{Loop accumulation bound}
\begin{lemma}[Loop product telescoping bound]
\label{lem:loop}
For $g_i,\tilde g_i\in O(k)$,
\[
\left\|\prod_{i=1}^L g_i - \prod_{i=1}^L \tilde g_i\right\|_F
\le \sum_{i=1}^L \|g_i-\tilde g_i\|_F.
\]
\end{lemma}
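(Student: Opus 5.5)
The plan is a standard telescoping argument combined with the orthogonal invariance of the Frobenius norm. Set $P_j:=\prod_{i=1}^{j} g_i$ and $\widetilde P_j:=\prod_{i=1}^{j}\tilde g_i$, with $P_0=\widetilde P_0=I$; the ordering convention, left-to-right or right-to-left as in the holonomy definition, does not matter as long as it is fixed. The first step is to write the difference of the full products as a telescoping sum in which the factors are swapped from $g$ to $\tilde g$ one at a time:
\[
\prod_{i=1}^L g_i-\prod_{i=1}^L \tilde g_i=\sum_{j=1}^{L}\Big(\prod_{i<j}\tilde g_i\Big)(g_j-\tilde g_j)\Big(\prod_{i>j} g_i\Big).
\]
Consecutive terms cancel, leaving only the two extreme products. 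This identity is checked by expanding: the $j$-th summand equals $\widetilde P_{j-1}\,g_j\cdots g_L-\widetilde P_{j}\,g_{j+1}\cdots g_L$.

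The second step applies the triangle inequality to this sum. Each summand then has the form $U(g_j-\tilde g_j)V$, where $U=\prod_{i<j}\tilde g_i$ and $V=\prod_{i>j}g_i$ are products of orthogonal matrices and so lie in $O(k)$. The Frobenius norm is invariant under left and right multiplication by orthogonal matrices, because
\[
\|UXV\|_F^2=\tr(V^\top X^\top U^\top U X V)=\tr(X^\top X)=\|X\|_F^2 .
\]
Hence $\|U(g_j-\tilde g_j)V\|_F=\|g_j-\tilde g_j\|_F$, and summing over $j$ gives the claimed bound.

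No step here is a real obstacle. The only point that needs care is the bookkeeping in the telescoping identity: which products are tilded and which are not must be placed consistently relative to the product order, so that the intermediate terms actually cancel. An induction on $L$ is an equivalent and more transparent route. Write
\[
P_L-\widetilde P_L=(P_{L-1}-\widetilde P_{L-1})g_L+\widetilde P_{L-1}(g_L-\tilde g_L),
\]
then use orthogonal invariance on each term and apply the inductive hypothesis. I would present it that way to avoid index errors.
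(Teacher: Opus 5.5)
Your proof is correct and follows the paper's own argument. Both telescope the difference of the products one factor at a time and use the fact that left and right multiplication by orthogonal matrices preserves the Frobenius norm; your inductive version is just a tidier way to organise the same computation.
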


\begin{proof}
Expand the difference as a telescoping sum. Orthogonal left/right multiplication preserves Frobenius norm.
\end{proof}

Lemmas~\ref{lem:polar} and \ref{lem:loop} predict two testable behaviours:
(i) edge diagnostics based on polar factors are less variable when $\sigma_{\min}(T_{vu})$ is bounded away from $0$, and
(ii) loop diagnostics are stable when edge diagnostics are stable, with variability accumulating with loop length.
The experiments quantify these predictions with global bootstrap summaries and within-edge/within-loop sample-size curves.

\FloatBarrier

\section{Atlas Method (pipeline)}
\label{sec:atlas}

This section states the pipeline used in experiments and clarifies which stage produces which reported numbers.
It is included because results A--D depend on multiple intermediate objects (clusters, overlaps, transports, proxies, loops), and reproducibility requires an explicit map from objects to measurements.

\subsection{Pipeline summary}
\begin{algorithm}[H]
\caption{Atlas Method (post-hoc cartography)}
\label{alg:atlas}
\begin{algorithmic}[1]
\Require Frozen model $\theta$, layer $\ell$, dataset $\mc X$, clusters $C$, support dimension $k$
\Ensure Graph $G$, edge defects $g_{vu}$, shearing $D_{\mathrm{shear}}$, holonomy $D_{\mathrm{hol}}$, jamming diagnostics
\State Extract token-level activations $\{x_t\}_{t=1}^N\subset\R^d$ at layer $\ell$ (optionally subsampling tokens for storage/compute).
\State Cluster activations into $C$ charts; build centroid $k$-NN graph $G=(V,E)$.
\State For each chart $c$: fit PCA basis $B_c\in\R^{d\times k}$ \cite{jolliffe2002pca}.
\State For jamming/interference (Result C): collect per-chart samples $(x_\ell,g_x)$ with $g_x=\nabla_{x_\ell}\ell(x)$ (next-token NLL), fit an overcomplete dictionary $D_c$ and sparse codes $z$, and form $\widehat G^{(c)}=\frac{1}{n}\sum_i (D_c g_{x,i})(D_c g_{x,i})^\top$.
\State For each edge $(u,v)$: build overlap set $X_{uv}$; compute overlap coordinates $Z_u=B_u^\top X_{uv}$, $Z_v=B_v^\top X_{uv}$.
\State Estimate $\widehat T_{vu}$ by ridge regression \cite{hoerl1970ridge}: $\widehat T_{vu}:= Z_v Z_u^\top (Z_u Z_u^\top+\lambda I)^{-1}$; compute $Q_{vu}:=\operatorname{polar}(\widehat T_{vu})$.
\State Construct the correspondence proxy $\widehat P_{vu}:=\operatorname{polar}(B_v^\top B_u)$ on the shared support (Section~\ref{sec:proxy-def}), and define the edge defect $g_{vu}=\widehat P_{vu}^\top Q_{vu}$.
\State Compute $D_{\mathrm{shear}}(u,v)$.
\State Choose spanning tree in LCC; compute chord holonomies and $D_{\mathrm{hol}}$ via Theorem~\ref{thm:A}.
\State (Optional) Filter edges by $\sigma_{\min}(\widehat T_{vu})\ge s_{\min}$ to form persistent subsystem and repeat steps 6--7.
\end{algorithmic}
\end{algorithm}

\subsection{Interpretation of the persistence filter}
Filtering edges by $\sigma_{\min}(\widehat T_{vu})\ge s_{\min}$ operationalises the conditioning requirement suggested by Lemma~\ref{lem:polar}.
This filter changes \emph{both} numerical stability and the graph topology: it can reduce the number of cycles available for holonomy computation.
The experiments report this tradeoff explicitly in Table~\ref{tab:persist}.

\FloatBarrier

\section{Experiments}
\label{sec:experiments}

\subsection{Experimental setup}
We used \texttt{meta-llama/Llama-3.2-3B-Instruct} and extracted activations at layer $\ell=16$ with $d=3072$ \cite{meta2024llama32modelcard}.
We used three data sources to obtain heterogeneous contexts: WikiText-103 (raw) for prose, a C4-derived English web-text subset (from \texttt{PrimeIntellect/c4-tiny} when available, otherwise streamed from \texttt{allenai/c4}), and code from \texttt{bigcode/the-stack-smol} \cite{merity2016wikitext,raffel2020t5,kocetkov2022stack}.
We extracted and stored $N\approx 2\times 10^5$ token activations \emph{after subsampling tokens} (every 8 tokens per sequence by default), clustered these activations into $C=128$ clusters, and built a centroid $k$-NN graph.
For transports and holonomy we used a shared chart support dimension $k=32$.

Overlap sets $X_{uv}$ were constructed by a Voronoi rule: an activation belongs to $X_{uv}$ if its nearest and second-nearest centroids are $\{u,v\}$.
In implementation, we compute nearest and second-nearest centroids from the fitted centroid set for each activation and form overlaps accordingly.
This concentrates overlap data near the boundary between clusters, which empirically improves transport fit stability relative to using all points in both clusters.

The main text focuses on the core empirical instantiations of Results~A--D on the baseline \texttt{Llama~3.2~3B Instruct} run. To assess whether these conclusions persist beyond a single checkpoint or a single implementation setting, the appendix reports additional robustness experiments covering cross-model transfer, layerwise variation, downstream seed reproducibility, a random-bases null control, atlas hyperparameter sweeps, graph-construction changes, and transport-regularisation ablations.
\subsection{A: spanning-tree gauge identity and persistence tradeoff}
\label{sec:experiments-A}
Result A makes two claims that are checkable numerically:
(i) after spanning-tree gauge fixing, tree-edge residuals are (numerically) zero,
and (ii) for each chord, the chord residual equals the holonomy of the associated fundamental cycle.

\paragraph{Quantitative identity check (tree residuals and chord/holonomy agreement).}
For each oriented edge we formed $g_{vu}=\widehat P_{vu}^\top Q_{vu}\in O(k)$.
We built a spanning tree on the LCC and constructed the spanning-tree gauge $(U_c)$ from Theorem~\ref{thm:A}.
We then measured:
\[
r_{\text{tree}}:=\|U_v g_{vu}U_u^\top-I\|_F\quad\text{on tree edges},\qquad
r_{\text{chord}}:=\|U_v g_{vu}U_u^\top-I\|_F\quad\text{on chords}.
\]
By Theorem~\ref{thm:A}, chord residuals equal fundamental-cycle holonomy defects $\|h_{\gamma_e}-I\|_F$.
In our runs the mean absolute difference between chord residual and holonomy defect was $\sim 10^{-6}$, i.e.\ numerical precision; tree residuals were $\sim 10^{-5}$ due to floating-point error and orthogonal projection.

Table~\ref{tab:A-identity} reports the identity-check statistics for baseline ($s_{\min}=0$) and persistent ($s_{\min}=0.015$) subsystems.
The key interpretation is:
\emph{the tree residuals are near zero, while chord residuals are large and match holonomy}.
This directly supports the claim that holonomy is not a coordinate artefact: it is exactly what remains after maximal gauge fixing.

\begin{table}[H]
\centering
\caption{Result A identity checks on the LCC ($k=32$). Tree residuals quantify successful gauge fixing on the spanning tree. Chord residuals coincide with fundamental-cycle holonomy as predicted by Theorem~\ref{thm:A}.}
\label{tab:A-identity}
\resizebox{\columnwidth}{!}{
\begin{tabular}{lrrrr}
\toprule
Subsystem & Tree residual mean & Tree residual max & Chord/hol mean $\|h-I\|_F$ & Chord/hol max $\|h-I\|_F$\\
\midrule
Baseline ($s_{\min}=0$) & $1.6003\times 10^{-5}$ & $2.3228\times 10^{-5}$ & $4.4330$ & $6.7364$\\
Persistent ($s_{\min}=0.015$) & $1.9701\times 10^{-5}$ & $3.1521\times 10^{-5}$ & $4.0650$ & $6.3957$\\
\bottomrule
\end{tabular}}
\end{table}

\paragraph{Persistence sweep (conditioning vs connectivity vs holonomy).}
The persistence threshold $s_{\min}$ filters edges by $\sigma_{\min}(\widehat T_{vu})\ge s_{\min}$.
This improves conditioning and, empirically, reduces holonomy magnitudes; but it also removes edges and can collapse cycles.
Table~\ref{tab:persist} quantifies this tradeoff.

The column $E_{\mathrm{LCC}}$ is the number of edges remaining in the LCC after filtering; it measures how much transition information remains connected.
The column \#chords is the number of non-tree edges; it is the number of independent fundamental cycles available for holonomy computation on that LCC.
The last two columns report mean and max normalised holonomy defect $D_{\mathrm{hol}}=\|h-I\|_F/\sqrt{2k}$.
The numerical pattern shows:
\begin{itemize}[leftmargin=2em]
\item As $s_{\min}$ increases, mean $D_{\mathrm{hol}}$ decreases from $0.554$ at $s_{\min}=0$ to $0.456$ at $s_{\min}=0.02$, suggesting that poorly conditioned edges inflate apparent curvature.
\item The number of available cycles decreases sharply (84 chords at baseline to 17 at $s_{\min}=0.02$), meaning aggressive filtering can make holonomy estimates less representative simply because fewer loops remain.
\end{itemize}

\begin{table}[H]
\centering
\caption{Persistence sweep ($k=32$). Increasing $s_{\min}$ improves conditioning but reduces connectivity and cycle availability. The reported $D_{\mathrm{hol}}$ values are computed on fundamental cycles induced by chords in the LCC.}
\label{tab:persist}
\begin{tabular}{lrrrrr}
\toprule
$s_{\min}$ & $E_{\mathrm{LCC}}$ & LCC size & \#chords & mean $D_{\mathrm{hol}}$ & max $D_{\mathrm{hol}}$\\
\midrule
0.0000 & 173 & 90 & 84 & 0.554 & 0.842\\
0.0100 & 129 & 84 & 46 & 0.517 & 0.799\\
0.0125 & 106 & 73 & 34 & 0.517 & 0.799\\
0.0150 &  90 & 62 & 29 & 0.508 & 0.799\\
0.0200 &  75 & 59 & 17 & 0.456 & 0.789\\
\bottomrule
\end{tabular}
\end{table}

\paragraph{Distributional view of holonomy (figure interpretation).}
Figure~\ref{fig:A-hol} complements Table~\ref{tab:persist} by showing the full distribution of $D_{\mathrm{hol}}$ values for baseline and persistent subsystems.
The histogram shows how mass shifts; the ECDF highlights tail behaviour (high-holonomy loops).
In this run the persistent distribution is shifted toward smaller values, but retains a substantial upper tail, indicating that improved conditioning does not eliminate genuine path dependence.

\begin{figure}[H]
\centering
\includegraphics[width=0.48\linewidth]{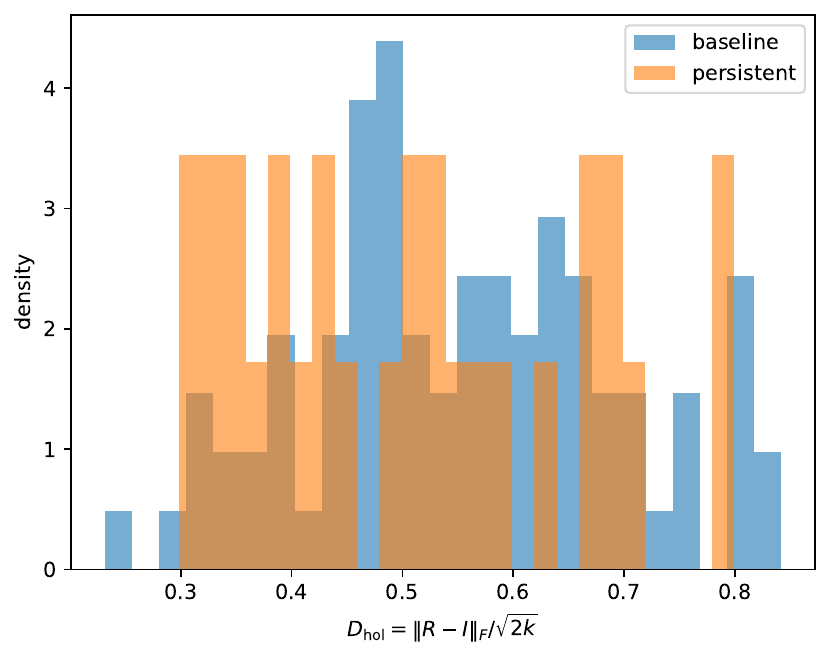}
\includegraphics[width=0.48\linewidth]{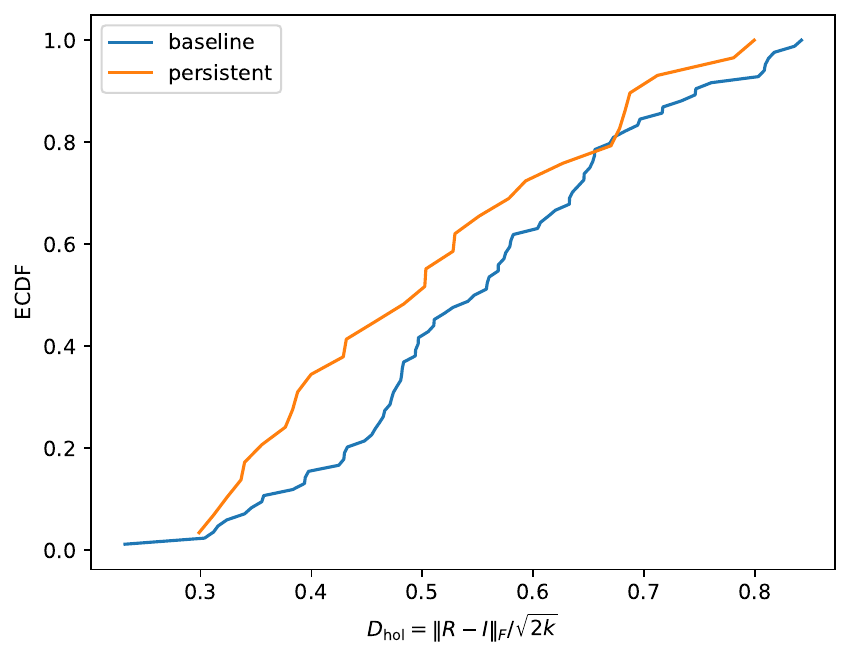}
\caption{Holonomy defect distributions (normalised) comparing baseline ($s_{\min}=0$) and persistent ($s_{\min}=0.015$) subsystems.}
\label{fig:A-hol}
\end{figure}

\subsection{B: empirical validation of Theorem B (shearing bound)}
\label{sec:experiments-B}

Theorem~\ref{thm:B} is a deterministic inequality. Empirically, we evaluate its instantiated quantities on each edge:
\[
\widehat\Delta_{uv}=\frac{1}{n}\|(Q_{vu}-\widehat P_{vu})Z_u\|_F^2,\qquad
\widehat{\mathrm{LB}}_{uv}=\lambda_{\min}(\widehat\Sigma_u)\|Q_{vu}-\widehat P_{vu}\|_F^2,
\]
where $\widehat\Sigma_u=\frac{1}{n}Z_u Z_u^\top$.
We report slack $\widehat\Delta_{uv}/\widehat{\mathrm{LB}}_{uv}$ (with numerical stabilisation).
Slack should satisfy $\widehat\Delta_{uv}/\widehat{\mathrm{LB}}_{uv}\ge 1$ up to numerical error when $\widehat{\mathrm{LB}}_{uv}>0$; values much larger than $1$ indicate that the bound is conservative.

\paragraph{Edge-level distributions (numbers and meaning).}
Table~\ref{tab:B-stats} summarises two aspects of Result B:
(i) the distribution of $D_{\mathrm{shear}}$ (how large the geometric/semantic mismatch is), and
(ii) the distribution of slack (how conservative the theoretical bound is on real overlap covariance).
For the baseline subsystem, mean $D_{\mathrm{shear}}$ is $0.2015$ with max $0.3743$.
For the persistent subsystem, mean $D_{\mathrm{shear}}$ is $0.1910$ with max $0.3730$.
This indicates that conditioning-based filtering slightly reduces average mismatch, but does not remove large-shearing edges.
Slack medians are around $7.8$, meaning $\widehat\Delta_{uv}$ is typically an order of magnitude larger than the eigenvalue-based lower bound; this is expected when $\lambda_{\min}(\widehat\Sigma_u)$ is small.

\begin{table}[H]
\centering
\caption{Result B summary statistics for shearing and slack (normalised) computed over all retained edges in the subsystem. Slack is $\widehat\Delta_{uv}/\widehat{\mathrm{LB}}_{uv}$; values $>1$ confirm the bound (up to numerical error), and large values quantify conservativeness due to small $\lambda_{\min}(\widehat\Sigma_u)$.}
\label{tab:B-stats}
\begin{tabular}{lrrrrrrrr}
\toprule
Subsystem & $n_{\text{edges}}$ & $D_{\min}$ & $D_{\text{med}}$ & $D_{\text{mean}}$ & $D_{\max}$ & slack$_{\min}$ & slack$_{\text{med}}$ & slack$_{\max}$\\
\midrule
Baseline & 173 & 0.0354 & 0.2080 & 0.2015 & 0.3743 & 2.314 & 7.802 & 399.525\\
Persistent & 90 & 0.0637 & 0.1966 & 0.1910 & 0.3730 & 2.565 & 7.837 & 399.525\\
\bottomrule
\end{tabular}
\end{table}

\paragraph{Figure interpretation (why each panel exists).}
Figures~\ref{fig:B-base} and \ref{fig:B-pers} each contain three panels:
\begin{enumerate}[leftmargin=2em]
\item \emph{$\widehat\Delta$ vs $\widehat{\mathrm{LB}}$:} points above the diagonal confirm the inequality edge-by-edge.
\item \emph{$\widehat\Delta$ vs $D_{\mathrm{shear}}$:} demonstrates that larger mismatch scores correspond to larger realised transfer mismatch energies.
\item \emph{Slack histogram:} quantifies how conservative the bound is; heavy tails arise when some edges have extremely small $\lambda_{\min}(\widehat\Sigma_u)$.
\end{enumerate}
In combination, these panels demonstrate (i) correctness, (ii) monotone usefulness, and (iii) the bound’s tightness regime.
For visualisation we exclude edges with near-zero $\lambda_{\min}(\widehat\Sigma_u)$ (e.g.\ $\lambda_{\min}<10^{-6}$), since the bound becomes numerically trivial and slack is dominated by stabilisation constants.

\begin{figure}[H]
\centering
\includegraphics[width=0.32\linewidth]{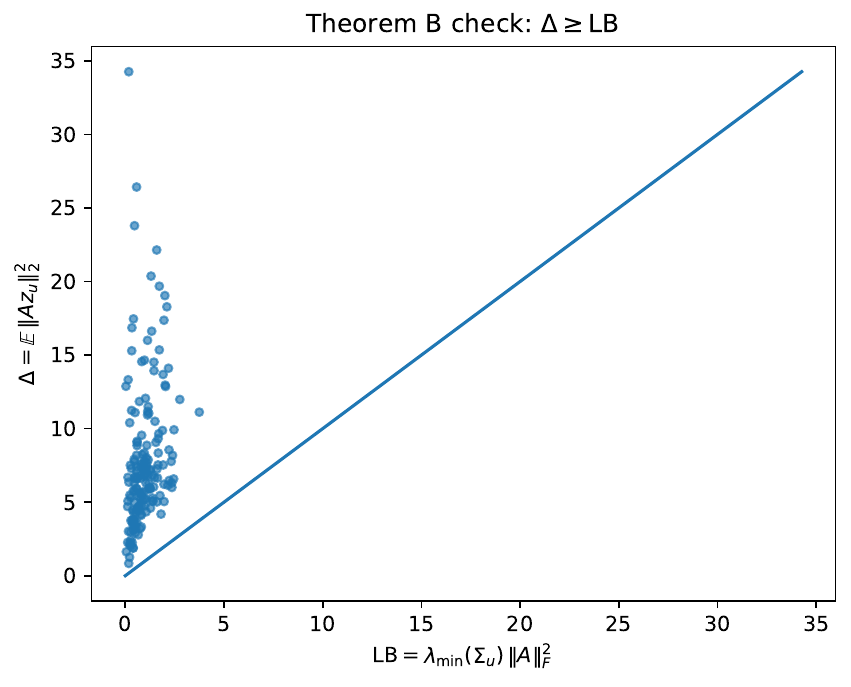}
\includegraphics[width=0.32\linewidth]{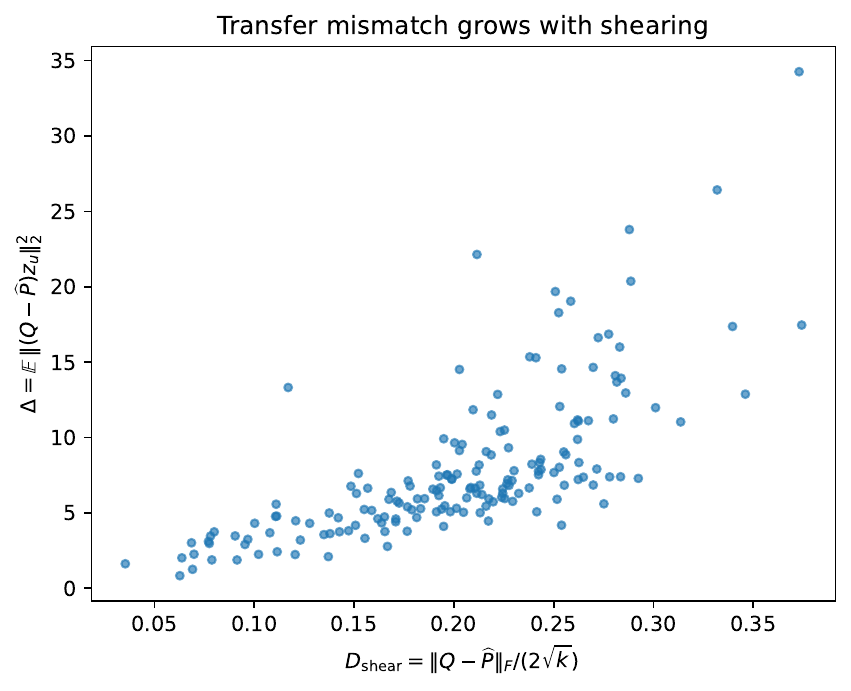}
\includegraphics[width=0.32\linewidth]{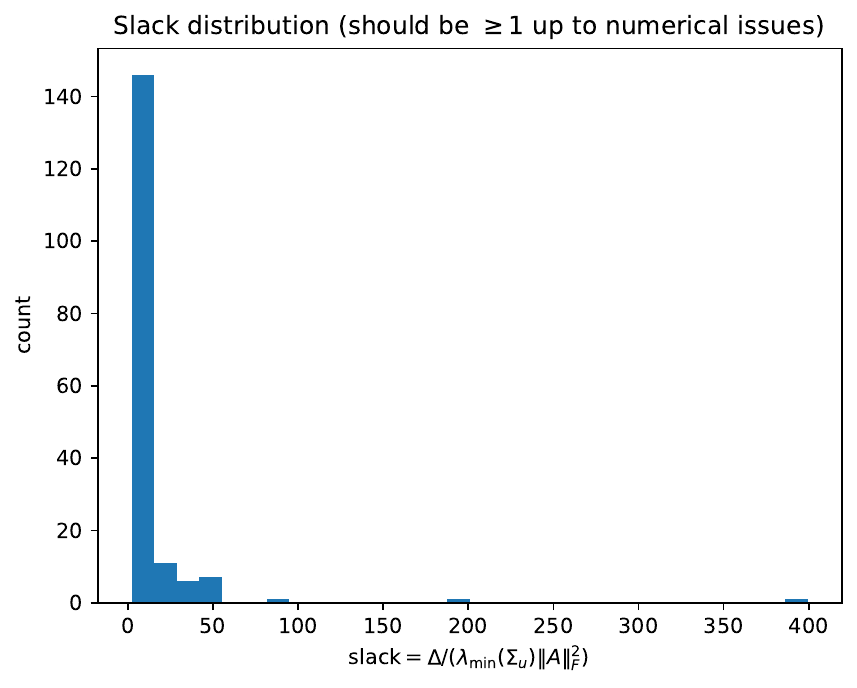}
\caption{Baseline subsystem: (left) $\widehat\Delta_{uv}$ vs lower bound $\widehat{\mathrm{LB}}_{uv}$ (points above diagonal), (middle) $\widehat\Delta_{uv}$ vs $D_{\mathrm{shear}}(u,v)$, (right) slack distribution.}
\label{fig:B-base}
\end{figure}

\begin{figure}[H]
\centering
\includegraphics[width=0.32\linewidth]{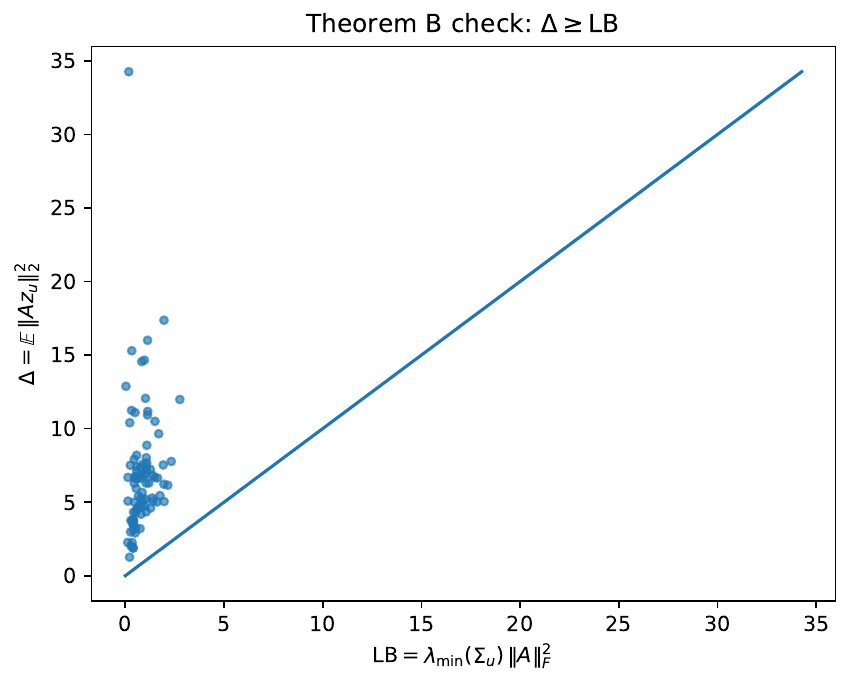}
\includegraphics[width=0.32\linewidth]{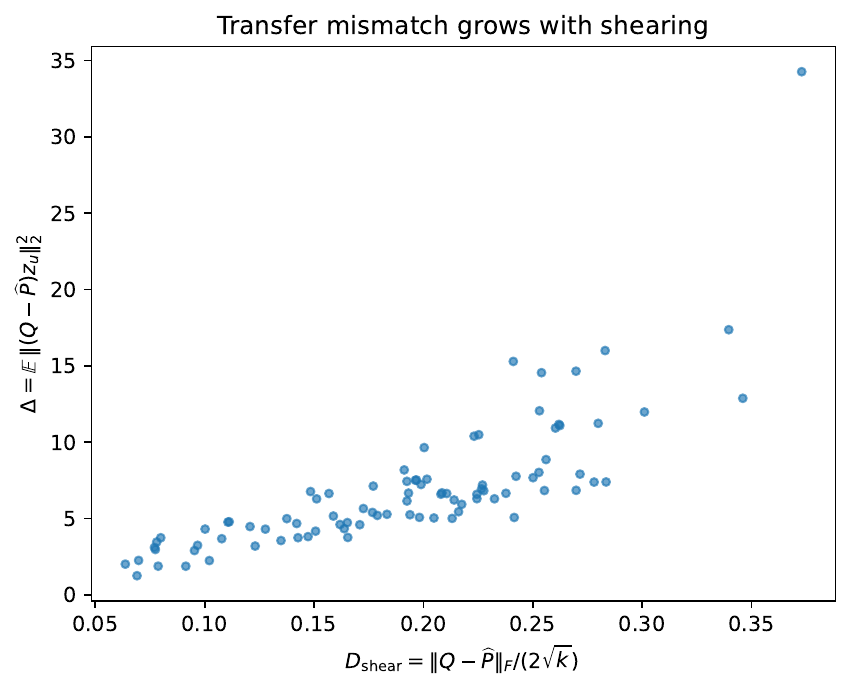}
\includegraphics[width=0.32\linewidth]{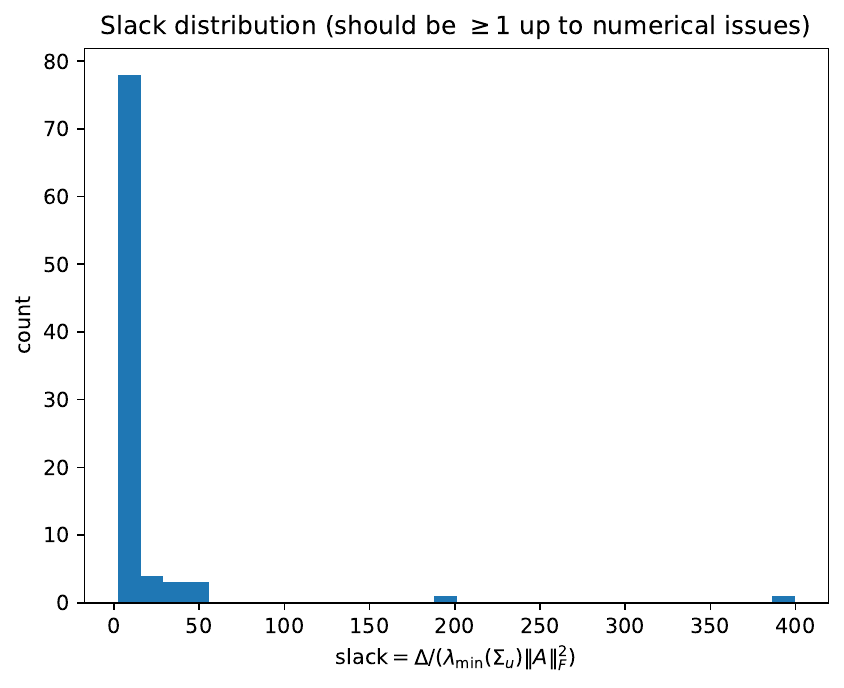}
\caption{Persistent subsystem ($s_{\min}=0.015$): the same diagnostics as Figure~\ref{fig:B-base}.}
\label{fig:B-pers}
\end{figure}

\subsection{C: certified jamming/interference (coverage, correctness, robustness)}
\label{sec:experiments-C}

Theorem~\ref{thm:C} is a deterministic certificate once the consequential subset $A$ and threshold $\tau_\star$ are fixed.
Empirically, the key scientific questions are:
\begin{enumerate}[leftmargin=2em]
\item \textbf{Non-vacuity (coverage):} how often does the procedure find $A$ with $\widehat{\mathrm{LB}}(c)>0$?
\item \textbf{Correctness:} are there any violations of $\mc E_A^{(r)}(c)\ge \widehat{\mathrm{LB}}(c)$ (slack $<1$)?
\item \textbf{Robustness:} are coverage/correctness stable across random seeds and dictionary hyperparameters?
\end{enumerate}

\paragraph{What the reported quantities mean.}
Cert.\ charts is the number of charts (out of 40 analysed) with $\widehat{\mathrm{LB}}(c)>0$; this is the non-vacuity measure.
Slack is $\mc E_A^{(r)}(c)/\widehat{\mathrm{LB}}(c)$ on certified charts; slack must be $\ge 1$.
Slack minimum therefore acts as a stringent correctness check.
Corr$(J,\mc E^{(r)})$ and Corr$(J,\mc E_A^{(r)})$ measure whether the diagnostic $J(c)$ tracks observed interference energies; these correlations are empirical and not part of the certificate.

\paragraph{Seed stability.}
Table~\ref{tab:C-seeds} shows that for $m=256,\alpha=1.0$ the certification rate is $0.85$--$0.875$ across seeds, and slack minima remain comfortably above $1$ (no violations).
These numbers mean that the certificate is not a rare event: most charts admit a nontrivial lower bound, and the algorithmic construction is not sensitive to random initialisation.

\begin{table}[H]
\centering
\caption{Seed stability for Result C (fixed $m=256$, $\alpha=1.0$; 40 charts). Cert.\ rate measures non-vacuity. Slack min $>1$ indicates zero violations. Correlations quantify the empirical association between jamming and interference.}
\label{tab:C-seeds}
\begin{tabular}{lrrrrrr}
\toprule
Seed & Cert.\ charts & Cert.\ rate & Slack median & Slack min & Corr$(J,\mc E^{(r)})$ & Corr$(J,\mc E_A^{(r)})$\\
\midrule
0 & 35 & 0.875 & 7.158 & 1.204 & 0.634 & 0.539\\
1 & 34 & 0.850 & 7.580 & 1.205 & 0.601 & 0.575\\
2 & 34 & 0.850 & 7.400 & 1.185 & 0.624 & 0.630\\
3 & 35 & 0.875 & 7.277 & 1.177 & 0.624 & 0.592\\
4 & 35 & 0.875 & 8.157 & 1.181 & 0.607 & 0.534\\
\bottomrule
\end{tabular}
\end{table}

\paragraph{Hyperparameter robustness.}
Table~\ref{tab:C-hparams} varies dictionary size $m$ and sparsity parameter $\alpha$.
Certification remains high ($0.85$--$0.90$), and slack minima remain $>1$ in all settings.
This indicates that the certificate is not tied to one specific dictionary configuration, which is essential if jamming is to be treated as a structural property of the chart rather than a modelling artefact.

\begin{table}[H]
\centering
\caption{Hyperparameter robustness for Result C (40 charts). Coverage and correctness are stable across dictionary sizes and sparsity strengths.}
\label{tab:C-hparams}
\begin{tabular}{lrrrrrr}
\toprule
$(m,\alpha)$ & Cert.\ charts & Cert.\ rate & Slack median & Slack min & Corr$(J,\mc E^{(r)})$ & Corr$(J,\mc E_A^{(r)})$\\
\midrule
(128, 0.5) & 35 & 0.875 & 6.785 & 1.198 & 0.464 & 0.464\\
(128, 1.0) & 36 & 0.900 & 6.801 & 1.198 & 0.538 & 0.550\\
(256, 0.5) & 34 & 0.850 & 6.746 & 1.204 & 0.497 & 0.477\\
(256, 1.0) & 35 & 0.875 & 7.158 & 1.204 & 0.634 & 0.539\\
(512, 0.5) & 34 & 0.850 & 7.054 & 1.202 & 0.424 & 0.292\\
(512, 1.0) & 34 & 0.850 & 8.566 & 1.202 & 0.662 & 0.620\\
\bottomrule
\end{tabular}
\end{table}

\begin{figure}[H]
\centering
\includegraphics[width=0.32\linewidth]{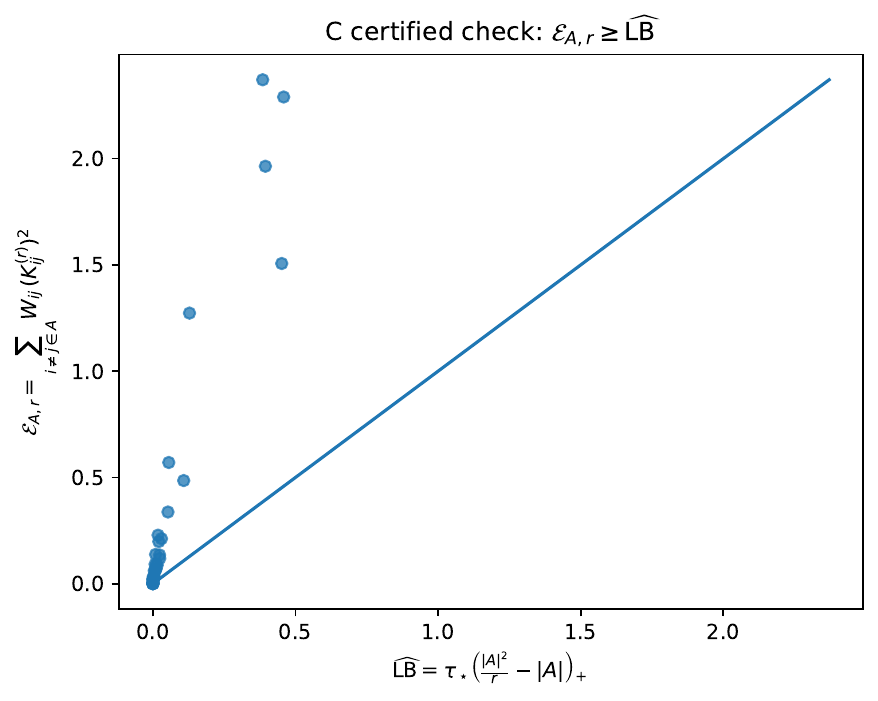}
\includegraphics[width=0.32\linewidth]{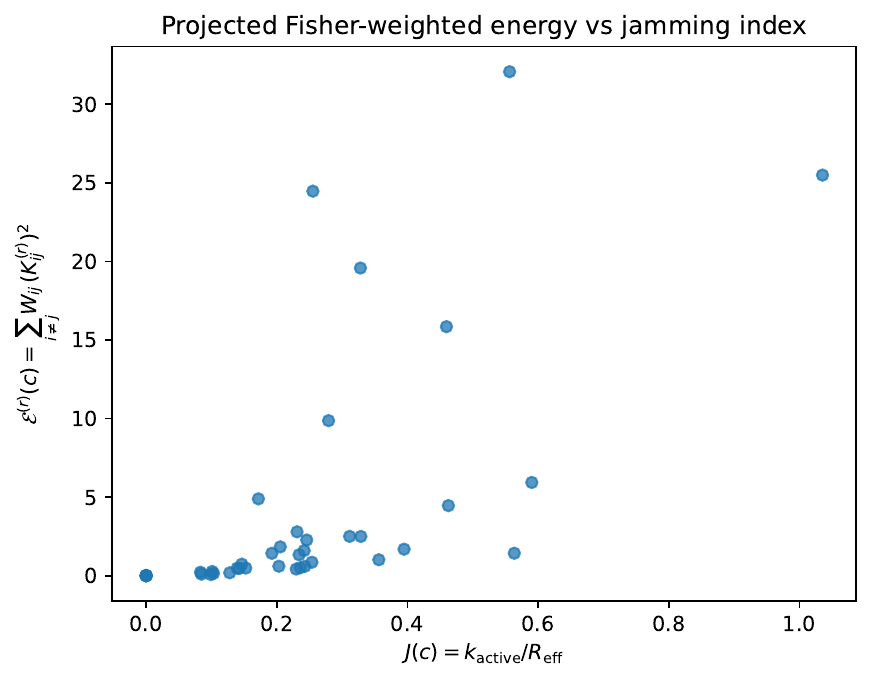}
\includegraphics[width=0.32\linewidth]{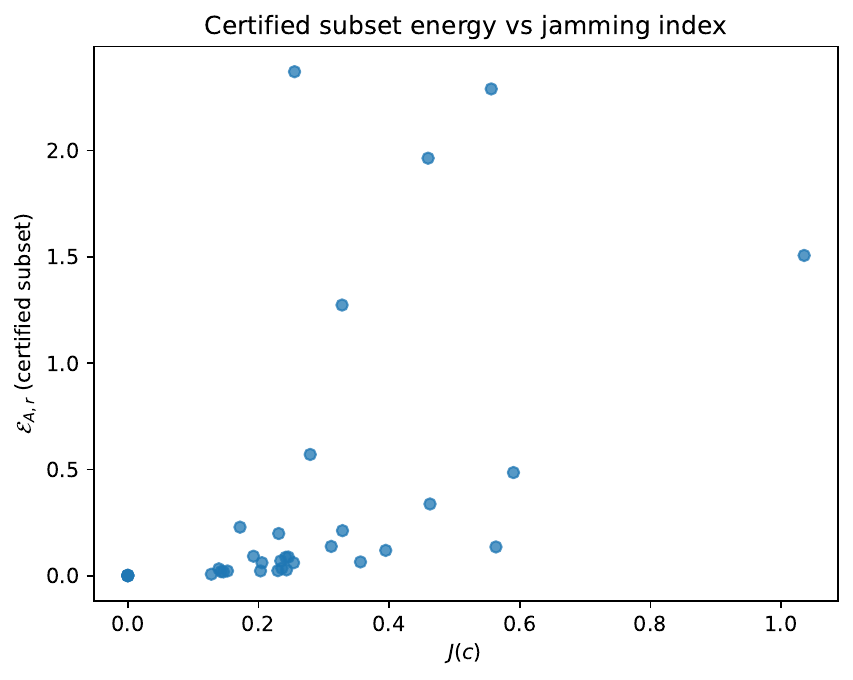}
\caption{Result C diagnostics (PDF). Left: certificate check $\mc E_A^{(r)}\ge \widehat{\mathrm{LB}}$. Middle: projected interference energy vs jamming index $J(c)$. Right: certified subset energy vs $J(c)$.}
\label{fig:C}
\end{figure}

\paragraph{Figure interpretation (certificate and correlation diagnostics).}
Figure~\ref{fig:C} contains three plots that answer three distinct questions:
\begin{enumerate}[leftmargin=2em]
\item \emph{Certificate check:} $\mc E_A^{(r)}$ vs $\widehat{\mathrm{LB}}$. Points must lie above the diagonal; this confirms the inequality numerically for all certified charts.
\item \emph{Projected energy vs jamming:} tests whether $J(c)$ tracks the measured interference energy on the bandwidth-projected subspace.
\item \emph{Certified energy vs jamming:} tests the same relationship restricted to the consequential subset used for certification.
\end{enumerate}
Together they show both that the certificate holds and that the jamming index has predictive value as an interference proxy.

\subsection{D: stability of $D_{\mathrm{shear}}$ and $D_{\mathrm{hol}}$ under resampling}
\label{sec:experiments-D}

Result D is about estimator behaviour under finite overlap sampling.
We use bootstrap resampling to quantify variability when overlap sets are resampled with replacement \cite{efron1979bootstrap}.
We report (i) global bootstrap summaries across many edges/loops and (ii) within-edge and within-loop sample-size curves that isolate concentration with increasing $n_{\mathrm{boot}}$.

\paragraph{Global bootstrap summary (distributional stability).}
We bootstrap $D_{\mathrm{shear}}$ and $D_{\mathrm{hol}}$ by resampling overlap sets with replacement.

For $D_{\mathrm{shear}}$, each bootstrap replicate resamples $n_{\mathrm{boot}}$ overlap points per selected edge and recomputes
$Q_{vu}=\operatorname{polar}(\widehat T_{vu})$ and $\widehat P_{vu}=\operatorname{polar}(B_v^\top B_u)$, yielding
$D_{\mathrm{shear}}(u,v)=\|Q_{vu}-\widehat P_{vu}\|_F/(2\sqrt{k})$.

For $D_{\mathrm{hol}}$, we sample fundamental cycles induced by chords in a spanning tree of the LCC: for each chord $(u,v)$ we close the unique
tree path from $u$ to $v$ with the chord step $v\to u$.
In each replicate we estimate the required per-edge $(Q_{vu},\widehat P_{vu})$ from resampled overlaps, form $g_{vu}=\widehat P_{vu}^\top Q_{vu}$, then compute the
loop product $h_\gamma=\prod g$ and $D_{\mathrm{hol}}(\gamma)=\|h_\gamma-I\|_F/\sqrt{2k}$.
Because overlap sizes vary by edge, some replicate--cycle evaluations may be undefined if one or more cycle edges lack sufficient overlap; the reported
global holonomy statistics therefore aggregate only those replicate--cycle evaluations for which all required cycle edges were successfully estimated
(hence the realised global \#samples can be $<B\times n_{\text{cycles}}$).

\begin{table}[H]
\centering
\caption{Global bootstrap stability summaries (normalised). Each row aggregates many bootstrap replicates across sampled edges (shearing) or cycles (holonomy). Quantiles describe the distributional tails of the measured defects.}
\label{tab:D-global}
\begin{tabular}{lrrrrrrr}
\toprule
Subsystem & Metric & \#samples & mean & std & q05 & q50 & q95\\
\midrule
Baseline & $D_{\mathrm{shear}}$ & 10000 & 0.2403 & 0.0689 & 0.0966 & 0.2463 & 0.3334\\
Baseline & $D_{\mathrm{hol}}$   & 1200  & 0.5772 & 0.0977 & 0.4277 & 0.5777 & 0.7386\\
\addlinespace
Persistent ($s_{\min}=0.015$) & $D_{\mathrm{shear}}$ & 10000 & 0.2033 & 0.0629 & 0.0845 & 0.2104 & 0.2951\\
Persistent ($s_{\min}=0.015$) & $D_{\mathrm{hol}}$   & 4200  & 0.5454 & 0.1355 & 0.3875 & 0.5038 & 0.8018\\
\bottomrule
\end{tabular}
\end{table}

\paragraph{Within-edge concentration (estimator variance at fixed edge).}
Tables~\ref{tab:D-edge-base} and \ref{tab:D-edge-pers} fix a single edge and vary $n_{\mathrm{boot}}$.
This isolates estimator concentration: the standard deviation in Table~\ref{tab:D-edge-pers} decreases from $0.00606$ at $n=256$ to $0.00169$ at $n=2736$, which is consistent with Lemma~\ref{lem:polar} (better conditioning implies smaller variability of polar factors and therefore of $D_{\mathrm{shear}}$).

\begin{table}[H]
\centering
\caption{Within-edge stability for $D_{\mathrm{shear}}$ on baseline edge (3--33), $B=200$.}
\label{tab:D-edge-base}
\begin{tabular}{rrr}
\toprule
$n_{\mathrm{boot}}$ & mean $D_{\mathrm{shear}}$ & std\\
\midrule
256  & 0.69348 & 0.00760\\
512  & 0.70326 & 0.00515\\
1024 & 0.70058 & 0.00588\\
2000 & 0.70735 & 0.00537\\
4000 & 0.70086 & 0.00824\\
\bottomrule
\end{tabular}
\end{table}

\begin{table}[H]
\centering
\caption{Within-edge stability for $D_{\mathrm{shear}}$ on persistent edge (68--83), $B=200$ (last row capped by available overlap). The decreasing std indicates concentration with increasing overlap size.}
\label{tab:D-edge-pers}
\begin{tabular}{rrr}
\toprule
$n_{\mathrm{boot}}$ & mean $D_{\mathrm{shear}}$ & std\\
\midrule
256  & 0.07365 & 0.00606\\
512  & 0.05668 & 0.00394\\
1024 & 0.04654 & 0.00247\\
2000 & 0.04146 & 0.00193\\
2736 & 0.03994 & 0.00169\\
\bottomrule
\end{tabular}
\end{table}

\paragraph{Within-loop concentration (cycle-level stability).}
Tables~\ref{tab:D-loop-base} and \ref{tab:D-loop-pers} fix a single fundamental cycle and vary $n_{\mathrm{boot}}$.
The standard deviation remains on the order of $2\times10^{-2}$ to $3\times10^{-2}$, which is expected because holonomy aggregates multiple edges (Lemma~\ref{lem:loop}).
The mean values stabilise after $n=512$ in both examples, supporting that $D_{\mathrm{hol}}$ is not a fragile artefact of a particular overlap subsample.

\begin{table}[H]
\centering
\caption{Within-loop stability for $D_{\mathrm{hol}}$ on a baseline fundamental cycle (chord 40--102), $B=200$.}
\label{tab:D-loop-base}
\begin{tabular}{rrr}
\toprule
$n_{\mathrm{boot}}$ & mean $D_{\mathrm{hol}}$ & std\\
\midrule
256  & 0.48596 & 0.03030\\
512  & 0.44759 & 0.02893\\
1024 & 0.44692 & 0.02899\\
2000 & 0.44595 & 0.02749\\
4000 & 0.44489 & 0.02925\\
\bottomrule
\end{tabular}
\end{table}

\begin{table}[H]
\centering
\caption{Within-loop stability for $D_{\mathrm{hol}}$ on a persistent fundamental cycle (chord 71--96), $B=200$.}
\label{tab:D-loop-pers}
\begin{tabular}{rrr}
\toprule
$n_{\mathrm{boot}}$ & mean $D_{\mathrm{hol}}$ & std\\
\midrule
256  & 0.56800 & 0.02492\\
512  & 0.54174 & 0.02157\\
1024 & 0.52724 & 0.02404\\
2000 & 0.52747 & 0.02169\\
4000 & 0.52957 & 0.02452\\
\bottomrule
\end{tabular}
\end{table}

\paragraph{Conclusion for D.}
Across both edge-level and loop-level tests, the reported diagnostics stabilise under resampling at practical overlap sizes.
Persistent filtering improves within-edge concentration, consistent with polar-factor perturbation theory.

\FloatBarrier

\section{Obstructions and atlas-level summaries}

\begin{definition}[Three geometric obstructions]
\begin{enumerate}[label=\textbf{O\arabic*:}, leftmargin=2.2em]
\item \textbf{Local jamming:} large $J(c)=k_{\mathrm{active}}(c)/R_{\mathrm{eff}}(G^{(c)})$ indicates load exceeds bandwidth.
\item \textbf{Proxy shearing:} large $D_{\mathrm{shear}}(u,v)$ implies unavoidable mismatch by Theorem~\ref{thm:B}.
\item \textbf{Nontrivial holonomy:} large $D_{\mathrm{hol}}(\gamma)$ obstructs global trivialisation by Theorem~\ref{thm:hol-obs}.
\end{enumerate}
\end{definition}

Atlas-level aggregates can be reported as means and quantiles:
\[
\mc J_{\mathrm{atlas}}=\E_c[J(c)],\quad
\mc S_{\mathrm{atlas}}=\E_{(u,v)}[D_{\mathrm{shear}}(u,v)],\quad
\mc H_{\mathrm{atlas}}=\E_{\gamma}[D_{\mathrm{hol}}(\gamma)].
\]
In this paper, the emphasis is on establishing that the edge and loop diagnostics are well-defined, certified when claimed, and stable under resampling.

\FloatBarrier

\section{Limitations}

\begin{enumerate}[leftmargin=2em]
\item \textbf{Correspondence-proxy dependence.} $\widehat P_{vu}$ is a proxy; conclusions about ``mismatch'' are conditional on the proxy definition.
\item \textbf{Transport model class.} We use linear transports on overlap codes as a deliberately simple and reproducible baseline that keeps the transport estimation, polar-factor analysis, and holonomy computation analytically transparent. Nonlinear transports may be required for certain strata, and extending the framework to richer local map classes is an important direction for future work.
\item \textbf{No topological overclaim.} Nontrivial holonomy obstructs a connection-compatible trivialisation under the learned transports (with respect to the fixed correspondence proxy); it does not prove topological nontriviality of an underlying continuous bundle.
\item \textbf{Scaling constraints.} Full Fisher matrices can be expensive; scalable variants (diagonal/block/low-rank) should be used for large $m_c$.
\item \textbf{Robustness scope.} The appendix evaluates cross-model transfer, layerwise variation, downstream seed stability, a random-bases null control, and moderate atlas/graph/transport ablations. However, we do not exhaust all possible proxy choices, nonlinear transport classes, or downstream task-level operational validations, which remain directions for future work.
\end{enumerate}

\section{Conclusion}
We presented a discrete gauge-theoretic atlas formalism for superposition in LLMs and grounded it in four results with concrete empirical instantiations.
Result A establishes a constructive gauge identity connecting chord residuals to holonomy, making holonomy computable and gauge-invariant.
Result B turns proxy shearing into an unavoidable transfer mismatch bound.
Result C provides a non-vacuous certified interference bound on consequential atom subsets with high coverage and zero violations across seeds/hyperparameters.
Result D shows that the central diagnostics are stable under resampling and concentrate with overlap size, particularly on persistent subsystems.
Together, these results support interpretability as an atlas construction problem with measurable local and global obstructions. The appendix further shows that these diagnostics are not restricted to a single baseline run: they transfer across multiple model families, vary systematically across depth, remain stable across downstream seeds, respond strongly to a random-bases null control, and are robust to moderate changes in atlas, graph, and transport construction. This strengthens the case that the framework is not merely a formal reinterpretation, but an empirically executable methodology for diagnosing when and why global interpretability fails.

\subsubsection*{Broader Impact Statement}
This paper introduces a new mathematical and empirical framework for mechanistic interpretability: a gauge- and sheaf-theoretic \emph{atlas} view of neural representations that replaces the common “single global dictionary” premise with a structured local-to-global objective. 
The core impact is conceptual and methodological: we define measurable geometric obstructions to global interpretability (local jamming, proxy shearing, and nontrivial holonomy), provide constructive gauge computations that make holonomy directly computable, and supply certified and stability-tested diagnostics that can be applied to frozen models. 
If adopted, this framework can change how interpretability research formulates its goals (from global feature discovery to atlas construction), and it can enable more rigorous evaluation of when and why interpretability methods fail due to context dependence, interference, and path-dependent transport.

As with most interpretability work, these tools are dual-use. They can be used to audit, debug, and improve model transparency, but they may also help an adversary identify representation regimes with large mismatch or curvature and exploit them to induce or amplify undesirable behaviours. 
The paper does not introduce new training procedures or increase model capability; it provides analysis tools and measurements on an existing frozen model. 
Responsible use includes applying these diagnostics to safety evaluation and robustness testing, and exercising caution when publishing fine-grained analyses that might facilitate targeted attacks.

\subsubsection*{Author Contributions}
Hossein Javidnia conceived the study; developed the theoretical framework; designed and implemented the experimental pipeline; conducted the experiments; analysed and interpreted the results; and wrote the manuscript.

\subsubsection*{Acknowledgments}
The author thanks the ADAPT Centre for providing access to high-performance computing (HPC) resources used for the experimental runs. The ADAPT Centre did not contribute to the study design, analysis, interpretation, or writing.

\bibliography{main}
\bibliographystyle{tmlr}

\newpage
\appendix

\paragraph{Appendix overview.}
This appendix collects seven additional empirical sections that broaden the scope of the main-text experiments. Their purpose is to test whether the proposed diagnostics are specific to one baseline run or instead reflect broader structural properties of the learned representation. The appendix is organised as follows. Section~\ref{app:model_generalisation} reports cross-model generalisation experiments on \texttt{Qwen~2.5~3B Instruct} and \texttt{Gemma~2~2B~IT}. Section~\ref{app:layerwise_generalisation} studies layerwise variation within the baseline Llama model. Section~\ref{app:seed_reproducibility} reports downstream seed-reproducibility results, and Section~\ref{app:null_random_bases} gives a random-bases null control. Sections~\ref{app:hparam_robustness}, \ref{app:graph_construction_robustness}, and \ref{app:transport_ablation} then examine robustness to atlas hyperparameters, graph construction, and transport regularisation, respectively. Taken together, these sections extend the empirical evidence for the framework while leaving the theoretical claims and main-text core results unchanged.

\section{Additional model-generalisation experiments}
\label{app:model_generalisation}

To test whether the proposed atlas-based pipeline is specific to a single checkpoint, we repeated the core analysis on two additional instruction-tuned models while keeping the remainder of the configuration aligned with the baseline setup used in the main experiments. In addition to the baseline model \texttt{meta-llama/Llama-3.2-3B-Instruct}, we ran the full pipeline on \texttt{Qwen/Qwen2.5-3B-Instruct} and \texttt{google/gemma-2-2b-it}. In all cases we used the same three data strata (WikiText-103, C4-derived web text, and \texttt{the-stack-smol}), the same hidden layer ($\ell=16$), the same activation sampling budget ($200{,}000$ tokens), the same atlas size ($C=128$ clusters), the same local support dimension ($k=32$), and the same downstream settings for transport, holonomy, shearing, and certified jamming. These experiments therefore isolate the effect of changing the model while leaving the rest of the atlas construction and evaluation pipeline fixed.

Both additional runs completed successfully and produced nondegenerate outputs for all three main diagnostics. First, the gauge sanity check continued to hold across models: tree-edge residuals remained numerically near zero, while chord residuals matched holonomy defects to numerical precision. Second, the shearing analysis produced a substantial set of usable edges in all runs together with clear nonzero shearing scores. Third, the certified jamming analysis remained nontrivial, with positive certification on most analysed clusters and no certification violations in any run. Overall, these experiments show that the framework transfers beyond the baseline Llama checkpoint, while also indicating that the resulting geometric and interference statistics are model-dependent rather than architecture-invariant.

\subsection{Experimental configuration}
\label{app:model_generalisation_setup}

Table~\ref{tab:model_generalisation_setup} summarises the fixed configuration used for the model-comparison experiments. The only quantity that changes across the three runs is the model checkpoint; all data, atlas, transport, and jamming settings are otherwise matched to the baseline pipeline.

\begin{table}[h]
\centering
\small
\begin{tabular}{ll}
\hline
\textbf{Setting} & \textbf{Value} \\
\hline
Baseline model & \texttt{meta-llama/Llama-3.2-3B-Instruct} \\
Additional model 1 & \texttt{Qwen/Qwen2.5-3B-Instruct} \\
Additional model 2 & \texttt{google/gemma-2-2b-it} \\
Layer & $16$ \\
Activation budget & $200{,}000$ tokens \\
Sequence length & $256$ \\
Sampling stride & every $8$ tokens \\
Number of clusters & $128$ \\
Local basis dimension & $k=32$ \\
kNN graph degree & $6$ \\
Ridge parameter & $\lambda = 10^{-2}$ \\
Minimum overlap & $256$ \\
Maximum overlap & $8000$ \\
Gradient samples per cluster & $512$ \\
Jamming analysis clusters & $40$ \\
Jamming dictionary width & $m=256$ \\
Jamming sparsity parameter & $\alpha = 1.0$ \\
\hline
\end{tabular}
\caption{Configuration used for the model-generalisation experiments. All settings were matched to the baseline pipeline except for the model checkpoint.}
\label{tab:model_generalisation_setup}
\end{table}

\subsection{Holonomy comparison}
\label{app:model_generalisation_holonomy}

Table~\ref{tab:model_generalisation_holonomy} summarises the graph-level holonomy results for the baseline Llama run and the two additional model runs. It reports both the underlying graph structure and the resulting holonomy statistics, allowing the reader to distinguish changes in cycle geometry from changes in numerical gauge stability.

\begin{table*}[t]
\centering
\small
\begin{tabular}{lccc}
\hline
\textbf{Metric} & \textbf{Llama 3.2 3B Instruct} & \textbf{Qwen 2.5 3B Instruct} & \textbf{Gemma 2 2B IT} \\
\hline
Connected components & 7 & 10 & 6 \\
Largest connected component size & 90 & 78 & 106 \\
Usable edges in LCC & 173 & 138 & 185 \\
Tree edges & 89 & 77 & 105 \\
Chord edges & 84 & 61 & 80 \\
Tree residual mean & $1.60\times 10^{-5}$ & $2.05\times 10^{-5}$ & $2.00\times 10^{-5}$ \\
Tree residual max & $2.32\times 10^{-5}$ & $2.66\times 10^{-5}$ & $3.04\times 10^{-5}$ \\
Chord residual mean & 4.433 & 5.522 & 4.370 \\
Chord residual max & 6.736 & 7.150 & 6.324 \\
Holonomy defect mean & 4.433 & 5.522 & 4.370 \\
Holonomy defect max & 6.736 & 7.150 & 6.324 \\
Normalised holonomy mean $D_{\mathrm{hol}}$ & 0.554 & 0.690 & 0.546 \\
Normalised holonomy max $D_{\mathrm{hol}}$ & 0.842 & 0.894 & 0.790 \\
Residual minus holonomy mean & $2.84\times 10^{-7}$ & $4.80\times 10^{-7}$ & $7.54\times 10^{-7}$ \\
Residual minus holonomy max & $1.77\times 10^{-6}$ & $1.78\times 10^{-6}$ & $3.86\times 10^{-6}$ \\
\hline
\end{tabular}
\caption{Holonomy comparison across the baseline Llama run and the two additional model-generalisation runs. In all cases tree residuals remain numerically negligible, while chord residuals and holonomy defects agree to numerical precision. Qwen exhibits the largest mean holonomy defect and the largest mean normalised holonomy, while Gemma remains close to the Llama baseline.}
\label{tab:model_generalisation_holonomy}
\end{table*}

The holonomy comparison shows that the discrete gauge construction remains well behaved under model substitution. As shown in Table~\ref{tab:model_generalisation_holonomy}, the spanning-tree residual is essentially zero up to numerical precision in all three runs, and the difference between chord residuals and explicit holonomy defects remains negligible. This means that the gauge-fixing construction and the associated cycle computation remain numerically stable across model families. At the same time, the magnitude of the cycle inconsistency varies meaningfully by model: Table~\ref{tab:model_generalisation_holonomy} shows that Qwen exhibits the largest mean holonomy defect and the largest mean normalised holonomy $D_{\mathrm{hol}}$, whereas Gemma remains much closer to the Llama baseline. Thus, the experiment supports two conclusions simultaneously: the construction is portable across models, and the resulting holonomy statistics are sensitive to model-specific representational geometry.

\subsection{Certified jamming comparison}
\label{app:model_generalisation_jamming}

Table~\ref{tab:model_generalisation_jamming} summarises the certified jamming results across the three model families. It reports certification coverage, slack statistics, and the empirical relation between the jamming index and projected interaction energy.

\begin{table*}[t]
\centering
\small
\begin{tabular}{lccc}
\hline
\textbf{Metric} & \textbf{Llama 3.2 3B Instruct} & \textbf{Qwen 2.5 3B Instruct} & \textbf{Gemma 2 2B IT} \\
\hline
Clusters analysed & 40 & 40 & 40 \\
Clusters with positive certified bound & 35 & 38 & 40 \\
Certification rate & 87.5\% & 95.0\% & 100.0\% \\
Certified slack min & 1.204 & 2.061 & 1.980 \\
Certified slack median & 7.158 & 6.834 & 3.368 \\
Certified slack mean & 10.938 & 9.663 & 4.038 \\
Certified slack max & 47.895 & 56.981 & 19.381 \\
Violations $(\mathrm{slack} < 1 - \mathrm{tol})$ & 0 & 0 & 0 \\
$\mathrm{Corr}(J, E_{\mathrm{full\_proj}})$ & 0.634 & 0.026 & -0.080 \\
$\mathrm{Corr}(J, E_{\mathrm{cert}})$ on certified subset & 0.539 & 0.032 & -0.074 \\
\hline
\end{tabular}
\caption{Certified jamming comparison across models. All three runs exhibit nontrivial certification with zero violations. Gemma achieves full certification coverage on the analysed subset and the tightest certified slack distribution, while the strong positive correlation between jamming index and projected/certified energy seen in the baseline Llama run does not persist in Qwen or Gemma.}
\label{tab:model_generalisation_jamming}
\end{table*}

The certified jamming analysis remains nontrivial under both model substitutions. As shown in Table~\ref{tab:model_generalisation_jamming}, Qwen yields positive certified bounds on a slightly larger fraction of analysed clusters than the baseline Llama run, while Gemma certifies all $40$ analysed clusters. Moreover, Gemma exhibits the tightest certified slack distribution among the three models, with substantially smaller median, mean, and maximum certified slack. At the same time, the empirical relationship between jamming index and interaction energy changes markedly across models. Table~\ref{tab:model_generalisation_jamming} shows that in the Llama baseline, the jamming index is positively correlated with both full projected energy and certified energy, whereas in Qwen the correlations are near zero and in Gemma they are slightly negative. This suggests that certification itself is robust across models, but the empirical coupling between local jamming and projected interaction energy is not architecture-invariant and may reflect model-specific representational organisation.

\subsection{Shearing comparison}
\label{app:model_generalisation_shearing}

Table~\ref{tab:model_generalisation_shearing} summarises the shearing results for the three model families. In addition to the shearing score itself, it reports the slack distribution, which is useful for understanding whether the lower bound is typically tight or loose in each model.

\begin{table*}[t]
\centering
\small
\begin{tabular}{lccc}
\hline
\textbf{Metric} & \textbf{Llama 3.2 3B Instruct} & \textbf{Qwen 2.5 3B Instruct} & \textbf{Gemma 2 2B IT} \\
\hline
Usable edges & 173 & 170 & 191 \\
Slack min & 2.314 & 2.644 & 3.290 \\
Slack median & 7.802 & 11.055 & 9.541 \\
Slack mean & 14.634 & 39652.777 & 28.476 \\
Slack max & 399.525 & 5368921.276 & 2073.852 \\
$D_{\mathrm{shear}}$ min & 0.0354 & 0.0669 & 0.0415 \\
$D_{\mathrm{shear}}$ median & 0.2080 & 0.2393 & 0.1731 \\
$D_{\mathrm{shear}}$ mean & 0.2015 & 0.2355 & 0.1674 \\
$D_{\mathrm{shear}}$ max & 0.3743 & 0.4045 & 0.3668 \\
\hline
\end{tabular}
\caption{Shearing comparison across models. All runs satisfy the lower-bound check, but the slack distributions differ substantially. Qwen exhibits the strongest typical shearing together with the heaviest-tailed slack distribution, whereas Gemma shows the weakest typical shearing of the three models.}
\label{tab:model_generalisation_shearing}
\end{table*}

The shearing experiment again transfers successfully to both additional models. As reported in Table~\ref{tab:model_generalisation_shearing}, the number of usable edges is comparable across all three runs, indicating similar coverage at the edge-analysis stage. The median shearing score $D_{\mathrm{shear}}$ is largest for Qwen, intermediate for the Llama baseline, and smallest for Gemma, suggesting a model-dependent ordering of local transport mismatch under the same atlas construction. The slack distributions, however, are more heterogeneous. Table~\ref{tab:model_generalisation_shearing} shows that Qwen exhibits a markedly heavy-tailed slack distribution, so its raw mean slack is dominated by a small number of extreme edges and is therefore not representative of typical behaviour. Importantly, these large slack values do not violate the lower-bound claim; they correspond to cases in which the bound is valid but loose. For this reason, median slack is the more reliable summary for comparing typical behaviour across models. On that metric, Qwen has the largest typical slack, Gemma is intermediate, and the Llama baseline is smallest.

\subsection{Summary}
\label{app:model_generalisation_summary}

Overall, the model-generalisation experiments show that the local-atlas superposition pipeline is not restricted to a single Llama checkpoint. As summarised in Tables~\ref{tab:model_generalisation_holonomy}, \ref{tab:model_generalisation_jamming}, and~\ref{tab:model_generalisation_shearing}, the full analysis transfers successfully to both Qwen 2.5 3B Instruct and Gemma 2 2B IT, yielding meaningful, nondegenerate outputs for holonomy, shearing, and certified jamming in all cases. At the same time, the resulting statistics are not identical across models. Qwen exhibits the strongest average holonomy and the strongest typical shearing, together with a substantially heavier-tailed shearing slack distribution. Gemma, by contrast, remains close to the Llama baseline in holonomy, exhibits the weakest typical shearing of the three models, and achieves the broadest and tightest certified jamming coverage. Finally, the strong positive relationship between jamming index and projected interaction energy observed in the Llama baseline does not persist in Qwen or Gemma. Overall, these additional experiments support the view that the framework captures genuine model-dependent geometric structure rather than merely reproducing artefacts of one specific checkpoint.

\section{Additional layerwise generalisation experiment on the baseline Llama model}
\label{app:layerwise_generalisation}

To test whether the proposed atlas-based pipeline is tied to a single depth within the baseline model, we repeated the full analysis across multiple hidden layers of \texttt{meta-llama/Llama-3.2-3B-Instruct} while keeping the remainder of the configuration fixed. We evaluated layers $\ell \in \{4,8,12,16,20,24\}$ using the same three data strata (WikiText-103, C4-derived web text, and \texttt{the-stack-smol}), the same activation sampling budget ($200{,}000$ tokens), the same atlas size ($C=128$ clusters), the same local support dimension ($k=32$), and the same downstream settings for transport estimation, holonomy, shearing, and certified jamming. This experiment therefore isolates the effect of representational depth while holding the rest of the atlas construction constant.

All six layerwise runs completed successfully and produced nondegenerate outputs for the three principal diagnostics. In every case, the gauge sanity check continued to hold: tree-edge residuals stayed at numerical precision, while chord residuals and explicit holonomy defects agreed to within very small numerical error. The shearing analysis also remained active at every tested layer, with substantial numbers of usable edges and nonzero transport-mismatch scores throughout. Finally, the certified jamming analysis remained nontrivial at all depths, with positive certified bounds on most analysed clusters and no violations of the certified lower-bound inequality. Taken together, these runs show that the framework is not confined to a single layer of the baseline model, and instead yields a structured depth-dependent profile.

\subsection{Experimental configuration}
\label{app:layerwise_setup}

Table~\ref{tab:layerwise_setup} summarises the fixed configuration used for the layerwise sweep. The only quantity that changes across runs is the hidden layer index; all other atlas, transport, and jamming settings are held constant.

\begin{table}[h]
\centering
\small
\begin{tabular}{ll}
\hline
\textbf{Setting} & \textbf{Value} \\
\hline
Model & \texttt{meta-llama/Llama-3.2-3B-Instruct} \\
Layers tested & $4,8,12,16,20,24$ \\
Activation budget & $200{,}000$ tokens \\
Sequence length & $256$ \\
Sampling stride & every $8$ tokens \\
Number of clusters & $128$ \\
Local basis dimension & $k=32$ \\
kNN graph degree & $6$ \\
Ridge parameter & $\lambda = 10^{-2}$ \\
Minimum overlap & $256$ \\
Maximum overlap & $8000$ \\
Gradient samples per cluster & $512$ \\
Jamming analysis clusters & $40$ \\
Jamming dictionary width & $m=256$ \\
Jamming sparsity parameter & $\alpha = 1.0$ \\
\hline
\end{tabular}
\caption{Configuration used for the layerwise generalisation experiment on the baseline Llama model. All settings were held fixed across runs except for the hidden layer index.}
\label{tab:layerwise_setup}
\end{table}

\subsection{Layerwise summary}
\label{app:layerwise_summary_table}

Table~\ref{tab:layerwise_summary} provides a compact layer-by-layer summary of the main quantities used in the analysis. It collects, in one place, the graph-level holonomy statistics, the certified jamming summaries, and the principal shearing metrics, and serves as the reference point for the more detailed interpretation in the following subsections.

\begin{table*}[t]
\centering
\small
\resizebox{\textwidth}{!}{
\begin{tabular}{lcccccc}
\hline
\textbf{Metric} & \textbf{Layer 4} & \textbf{Layer 8} & \textbf{Layer 12} & \textbf{Layer 16} & \textbf{Layer 20} & \textbf{Layer 24} \\
\hline
Connected components & 9 & 6 & 9 & 7 & 11 & 8 \\
Largest connected component size & 40 & 95 & 70 & 90 & 81 & 51 \\
Tree residual mean & $1.35\times 10^{-5}$ & $1.93\times 10^{-5}$ & $1.93\times 10^{-5}$ & $1.60\times 10^{-5}$ & $1.53\times 10^{-5}$ & $1.16\times 10^{-5}$ \\
Chord edges & 24 & 76 & 101 & 84 & 85 & 80 \\
Normalised holonomy mean $D_{\mathrm{hol}}$ & 0.668 & 0.578 & 0.430 & 0.554 & 0.607 & 0.588 \\
Normalised holonomy max $D_{\mathrm{hol}}$ & 0.841 & 0.915 & 0.780 & 0.842 & 0.890 & 0.794 \\
Saved clusters for jamming & 107 & 122 & 113 & 124 & 121 & 111 \\
Clusters with positive certified bound & 39 & 39 & 36 & 35 & 36 & 36 \\
Certified slack median & 6.176 & 7.582 & 8.300 & 7.158 & 7.114 & 5.996 \\
$\mathrm{Corr}(J,E_{\mathrm{full\_proj}})$ & 0.617 & 0.806 & 0.304 & 0.634 & 0.795 & 0.073 \\
$\mathrm{Corr}(J,E_{\mathrm{cert}})$ & 0.627 & 0.843 & 0.302 & 0.539 & 0.779 & 0.077 \\
Usable edges in shearing analysis & 80 & 177 & 178 & 183 & 176 & 135 \\
Slack median & 15.547 & 6.343 & 6.419 & 8.119 & 7.540 & 7.750 \\
$D_{\mathrm{shear}}$ median & 0.262 & 0.214 & 0.162 & 0.206 & 0.208 & 0.225 \\
\hline
\end{tabular}}
\caption{Layerwise summary of holonomy, certified jamming, and shearing on \texttt{meta-llama/Llama-3.2-3B-Instruct}. All quantities are computed under a matched atlas configuration; only the hidden layer changes.}
\label{tab:layerwise_summary}
\end{table*}

\subsection{Holonomy across depth}
\label{app:layerwise_holonomy}

As shown in Table~\ref{tab:layerwise_summary}, the holonomy analysis remains valid at every tested depth. Across all six layers, the mean tree residual stays on the order of $10^{-5}$, indicating that the spanning-tree gauge construction remains numerically stable throughout the sweep. Moreover, the chord residuals and explicit holonomy defects continue to agree to numerical precision at every layer, so the cycle-level gauge computation is not restricted to one special depth.

At the same time, the magnitude of the holonomy signal is clearly depth-dependent. Table~\ref{tab:layerwise_summary} shows that the mean normalised holonomy $D_{\mathrm{hol}}$ is relatively high at layer $4$ ($0.668$), decreases through layers $8$ and $12$ to a minimum at layer $12$ ($0.430$), and then rises again in the deeper portion of the network, reaching $0.607$ at layer $20$ and remaining elevated at layer $24$ ($0.588$). Thus, the holonomy profile is non-monotonic: cycle inconsistency is strongest early, weakest around the middle of the sweep, and then re-emerges at later layers. This is already evidence that the geometric obstruction measured by the holonomy diagnostic varies systematically with representational depth rather than appearing as a flat artefact of the pipeline.

\subsection{Certified jamming across depth}
\label{app:layerwise_jamming}

The certified jamming analysis also remains active throughout the layer sweep. As reported in Table~\ref{tab:layerwise_summary}, positive certified bounds are obtained for the majority of analysed clusters at every layer, ranging from $35/40$ to $39/40$, and there are no certification violations. This shows that the local sparse-coding and certification stage is not confined to one part of the model.

However, the empirical relation between jamming and energy varies considerably with depth. Table~\ref{tab:layerwise_summary} shows that the correlation $\mathrm{Corr}(J,E_{\mathrm{full\_proj}})$ is moderately strong at layer $4$ ($0.617$), becomes very strong at layer $8$ ($0.806$), weakens markedly at layer $12$ ($0.304$), rises again at layers $16$ and $20$ ($0.634$ and $0.795$ respectively), and then nearly disappears by layer $24$ ($0.073$). The same pattern appears for the correlation between $J$ and certified energy on the certified subset. Thus, while certification itself is broadly stable across depth, the extent to which the jamming index empirically tracks projected interaction energy is highly layer-dependent. This suggests that the local interference structure measured by $J$ is not equally predictive of projected energy at all stages of the network.

The certified slack distribution also changes with depth, though less dramatically than the correlation structure. As seen in Table~\ref{tab:layerwise_summary}, the median certified slack ranges from roughly $6.0$ to $8.3$, with the largest median value observed at layer $12$. In other words, certification remains present across the entire sweep, but its tightness and empirical interpretation vary with representational depth.

\subsection{Shearing across depth}
\label{app:layerwise_shearing}

The shearing analysis produces usable edge sets and nonzero transport-mismatch scores at every tested layer. Table~\ref{tab:layerwise_summary} shows that the median shearing score $D_{\mathrm{shear}}$ exhibits a particularly clear depth profile: it is largest at layer $4$ ($0.262$), decreases through layers $8$ and $12$ to a minimum at layer $12$ ($0.162$), and then increases again in later layers, reaching $0.225$ by layer $24$. This non-monotonic pattern closely mirrors the broad structure observed in the holonomy measurements.

The slack distribution is again more variable than the shearing score itself. Table~\ref{tab:layerwise_summary} shows that layer $4$ has the largest median slack ($15.547$), whereas the middle and later layers fall into a lower band between approximately $6.3$ and $8.1$. As in the model-comparison experiments, large slack values do not indicate a violation of the lower-bound claim; rather, they correspond to cases in which the bound is valid but relatively loose. For this reason, median slack is more informative than mean slack when comparing typical behaviour across layers.

Taken together, the shearing results indicate that local transport mismatch is strongest at the earliest tested layer, weakest near the middle of the sweep, and then increases again toward later layers. This reinforces the broader conclusion that the geometric obstruction measured by the atlas construction is structured across depth rather than uniformly distributed.

\subsection{Summary}
\label{app:layerwise_summary_text}

Overall, the layerwise experiment shows that the proposed framework is not only operational across models but also sensitive to representational depth within a single model. As summarised in Table~\ref{tab:layerwise_summary}, holonomy, shearing, and certified jamming all remain measurable and nontrivial at every tested layer of \texttt{meta-llama/Llama-3.2-3B-Instruct}, but none of these diagnostics is constant across depth. Instead, the results exhibit a structured non-monotonic profile: both holonomy and shearing are comparatively strong at layer $4$, weaken around layer $12$, and then rise again in later layers. Certified jamming persists throughout the sweep, while the empirical correlation between jamming index and projected interaction energy varies strongly with depth, becoming especially weak at layers $12$ and $24$. These observations support the view that the framework is capturing genuine depth-dependent geometric structure rather than simply returning the same statistic at every layer.

\section{Additional reproducibility experiment: seed stability on the baseline Llama model}
\label{app:seed_reproducibility}

To test whether the main empirical findings are sensitive to random initialisation in the downstream stages of the pipeline, we performed a seed-reproducibility sweep on the baseline model \texttt{meta-llama/Llama-3.2-3B-Instruct} at layer $\ell=16$. In this experiment we held the baseline activation set and cluster graph fixed and varied only the seed-aware downstream stages, namely local basis estimation, edge-transport estimation, cluster-wise activation/gradient sampling, and the certified jamming computation. We evaluated seeds $s \in \{0,1,2,3,4\}$ under the same atlas configuration used in the main experiments. This design isolates reproducibility of the geometric and interference diagnostics under repeated stochastic downstream fitting, without conflating that variation with changes in the extracted activation corpus or the upstream clustering.

All five seed runs completed successfully and produced nondegenerate outputs for holonomy, shearing, and certified jamming. Across all seeds, the gauge sanity check remained stable: tree-edge residuals stayed at numerical precision, the number of chord edges in the largest connected component remained fixed, and chord residuals matched holonomy defects to numerical precision. The shearing and jamming stages also remained qualitatively stable across all seeds, with closely matched median shearing scores, closely matched certified slack statistics, and a consistently positive correlation between the jamming index and projected interaction energy. Overall, this experiment shows that the core findings for the baseline Llama run are not artefacts of a single downstream seed.

\subsection{Experimental configuration}
\label{app:seed_reproducibility_setup}

Table~\ref{tab:seed_reproducibility_setup} summarises the configuration used for the reproducibility sweep. The important point is that this is a \emph{downstream} seed experiment: the baseline activations and cluster graph are fixed, and only the seed-sensitive later stages are rerun.

\begin{table}[h]
\centering
\small
\begin{tabular}{ll}
\hline
\textbf{Setting} & \textbf{Value} \\
\hline
Model & \texttt{meta-llama/Llama-3.2-3B-Instruct} \\
Layer & $16$ \\
Seeds tested & $0,1,2,3,4$ \\
Activation source & fixed baseline activations \\
Graph source & fixed baseline cluster graph \\
Number of clusters & $128$ \\
Local basis dimension & $k=32$ \\
kNN graph degree & $6$ \\
Ridge parameter & $\lambda = 10^{-2}$ \\
Minimum overlap & $256$ \\
Maximum overlap & $8000$ \\
Gradient samples per cluster & $512$ \\
Jamming analysis clusters & $40$ \\
Jamming dictionary width & $m=256$ \\
Jamming sparsity parameter & $\alpha = 1.0$ \\
\hline
\end{tabular}
\caption{Configuration used for the downstream seed-reproducibility experiment. The baseline activations and cluster graph were held fixed, while seed-aware downstream stages were rerun for each seed.}
\label{tab:seed_reproducibility_setup}
\end{table}

\subsection{Seedwise summary}
\label{app:seed_reproducibility_table}

Table~\ref{tab:seed_reproducibility} gives the full seed-by-seed summary of the main quantities reported by the pipeline. The most important features to note are the near-constancy of the graph-level holonomy quantities, the narrow variation in jamming certification statistics, and the very small spread in the median shearing score across seeds.

\begin{table*}[t]
\centering
\small
\begin{tabular}{lccccc}
\hline
\textbf{Metric} & \textbf{Seed 0} & \textbf{Seed 1} & \textbf{Seed 2} & \textbf{Seed 3} & \textbf{Seed 4} \\
\hline
Largest connected component size & 90 & 90 & 90 & 90 & 90 \\
Chord edges & 84 & 84 & 84 & 84 & 84 \\
Tree residual mean & $1.60\times 10^{-5}$ & $1.60\times 10^{-5}$ & $1.56\times 10^{-5}$ & $1.57\times 10^{-5}$ & $1.57\times 10^{-5}$ \\
Normalised holonomy mean $D_{\mathrm{hol}}$ & 0.5541 & 0.5445 & 0.5582 & 0.5556 & 0.5453 \\
Normalised holonomy max $D_{\mathrm{hol}}$ & 0.8420 & 0.8546 & 0.8851 & 0.8925 & 0.8566 \\
Saved clusters for jamming & 124 & 124 & 124 & 124 & 124 \\
Clusters with positive certified bound & 35 & 34 & 34 & 35 & 35 \\
Certified slack median & 7.158 & 7.580 & 7.400 & 7.277 & 8.157 \\
$\mathrm{Corr}(J,E_{\mathrm{full\_proj}})$ & 0.634 & 0.601 & 0.624 & 0.624 & 0.607 \\
$\mathrm{Corr}(J,E_{\mathrm{cert}})$ & 0.539 & 0.575 & 0.630 & 0.592 & 0.534 \\
Usable edges in shearing analysis & 183 & 183 & 183 & 183 & 183 \\
Slack median & 8.119 & 8.283 & 8.245 & 7.928 & 8.415 \\
$D_{\mathrm{shear}}$ median & 0.2055 & 0.2058 & 0.2100 & 0.2041 & 0.2032 \\
\hline
\end{tabular}
\caption{Seedwise summary of the downstream reproducibility experiment on \texttt{meta-llama/Llama-3.2-3B-Instruct} at layer $16$. The baseline activations and graph were fixed; only seed-aware downstream stages were rerun.}
\label{tab:seed_reproducibility}
\end{table*}

\subsection{Compact stability summary}
\label{app:seed_reproducibility_compact}

While Table~\ref{tab:seed_reproducibility} shows the full seedwise results, Table~\ref{tab:seed_reproducibility_compact} condenses the same information into mean/std summaries across seeds. This makes the small magnitude of the run-to-run variation easier to see directly. In particular, the standard deviations for the main quantities are all modest, with especially tight concentration for the mean holonomy and the median shearing score.

\begin{table}[h]
\centering
\small
\begin{tabular}{lcc}
\hline
\textbf{Metric} & \textbf{Mean across seeds} & \textbf{Std.\ across seeds} \\
\hline
Normalised holonomy mean $D_{\mathrm{hol}}$ & 0.5515 & 0.0061 \\
Clusters with positive certified bound & 34.6 & 0.5 \\
Certified slack median & 7.514 & 0.365 \\
$\mathrm{Corr}(J,E_{\mathrm{full\_proj}})$ & 0.618 & 0.012 \\
$\mathrm{Corr}(J,E_{\mathrm{cert}})$ & 0.574 & 0.036 \\
Slack median & 8.198 & 0.170 \\
$D_{\mathrm{shear}}$ median & 0.2057 & 0.0023 \\
\hline
\end{tabular}
\caption{Compact mean/std summary across seeds for the main reproducibility metrics. Variability is small for all reported quantities, indicating strong downstream seed stability.}
\label{tab:seed_reproducibility_compact}
\end{table}

\subsection{Holonomy stability across seeds}
\label{app:seed_reproducibility_holonomy}

The holonomy computation is highly stable across seeds. As shown in Tables~\ref{tab:seed_reproducibility} and~\ref{tab:seed_reproducibility_compact}, the largest connected component size remains fixed at $90$ in every run, the number of chord edges remains fixed at $84$, and the mean tree residual stays tightly concentrated around $1.6\times 10^{-5}$. Likewise, the mean normalised holonomy $D_{\mathrm{hol}}$ varies only slightly across the sweep, from $0.5445$ to $0.5582$, with a standard deviation of approximately $0.006$. This shows that the baseline holonomy signal is not a one-seed artefact of local basis estimation or downstream fitting. Rather, the gauge-fixing and cycle-defect measurements are highly reproducible under the seed perturbations considered here.

\subsection{Certified jamming stability across seeds}
\label{app:seed_reproducibility_jamming}

The certified jamming results are also stable across seeds. Table~\ref{tab:seed_reproducibility} shows that the number of analysed clusters remains fixed at $40$ in all runs, while the number of clusters with positive certified bound varies only between $34$ and $35$. The median certified slack lies in a narrow interval between approximately $7.28$ and $8.16$, and Table~\ref{tab:seed_reproducibility_compact} shows that its standard deviation across seeds is only $0.365$. Most importantly, the correlation between the jamming index and full projected energy remains consistently positive across all seeds, ranging from $0.601$ to $0.634$, while the corresponding correlation on the certified subset remains positive as well. Thus, the empirical coupling between local jamming and projected interaction energy in the baseline run is not fragile to downstream reseeding.

\subsection{Shearing stability across seeds}
\label{app:seed_reproducibility_shearing}

The shearing results exhibit similarly strong reproducibility. As reported in Table~\ref{tab:seed_reproducibility}, the number of usable edges in the shearing analysis remains fixed at $183$ across all seeds. The median shearing score $D_{\mathrm{shear}}$ lies in the very narrow range $[0.2032,\,0.2100]$, and Table~\ref{tab:seed_reproducibility_compact} shows a standard deviation of only $0.0023$. The median slack is likewise stable, lying between $7.93$ and $8.42$. These results indicate that the typical strength of the shearing signal is essentially unchanged across downstream seeds, even though the individual local fits are stochastic.

\subsection{Summary}
\label{app:seed_reproducibility_summary}

Taken together, these seed sweeps show that the baseline Llama layer-$16$ findings are robust to randomness in the downstream seed-sensitive stages of the pipeline. The holonomy signal, the shearing signal, and the certified jamming signal all remain qualitatively unchanged across the five seeds tested. In particular, the gauge sanity checks remain numerically valid, the magnitudes of the principal diagnostics vary only modestly, and the positive empirical relation between jamming index and projected interaction energy persists throughout the sweep. While this experiment does not constitute a full end-to-end reseeding of data extraction and clustering, it provides strong evidence that the core results are not artifacts of a single downstream random seed.

\section{Additional null-control experiment: random local bases}
\label{app:null_random_bases}

To test whether the reported geometric diagnostics depend on the \emph{learned} local chart structure rather than merely on the existence of arbitrary orthonormal frames, we performed a random-bases null control on the baseline \texttt{meta-llama/Llama-3.2-3B-Instruct} run at layer $\ell=16$. In this null experiment, we held the baseline activations, cluster assignments, and graph fixed, but replaced the learned PCA-based local bases in each cluster with random orthonormal bases of the same shape. We then reran the downstream edge-transport, holonomy, and shearing analyses without changing any other parameters. This null therefore preserves the combinatorial atlas structure while destroying the learned local alignment that the method is intended to capture.

Table~\ref{tab:null_random_bases} summarises the comparison between the baseline run and the random-bases null. Several important points emerge. First, the numerical gauge construction itself remains well behaved under the null: the largest connected component size remains $90$, the number of chord edges remains $84$, and the tree residual remains at numerical precision. Thus, the null does not simply break the pipeline mechanically. Second, however, the geometric quantities change dramatically. The mean normalised holonomy increases from $0.554$ in the baseline run to $1.002$ under random bases, while the median shearing score $D_{\mathrm{shear}}$ increases from $0.206$ to $0.644$. In other words, once the learned local bases are replaced by arbitrary orthonormal frames, both cycle inconsistency and local transport mismatch become substantially larger.

Interestingly, the slack distribution becomes \emph{less} heavy-tailed under the null, with the median slack decreasing from $8.119$ to $3.932$ and the maximum slack dropping sharply. This does not contradict the interpretation above. Slack measures the looseness of the lower bound, not the absolute quality of the local geometry. In the random-bases null, the realised mismatch becomes much larger, so the ratio between realised mismatch and lower bound can shrink even though the geometric structure itself becomes much less meaningful. For this reason, the decisive quantities in this null control are the large increases in holonomy and shearing, not the decrease in slack.

Overall, this null control strongly supports the interpretation that the framework is sensitive to learned local geometry rather than arbitrary frame choices. If the local bases were irrelevant, replacing them with random orthonormal charts would not materially change the results. Instead, the experiment shows the opposite: the downstream computations still run, but the resulting geometry becomes substantially more inconsistent. This indicates that the reported baseline signals are not generic artefacts of any orthonormal charting of the latent space, but depend on the specific learned local alignment extracted from the activations.

\begin{table*}[t]
\centering
\small
\begin{tabular}{lcc}
\hline
\textbf{Metric} & \textbf{Baseline Llama} & \textbf{Random-bases null} \\
\hline
Largest connected component size & 90 & 90 \\
Chord edges & 84 & 84 \\
Tree residual mean & $1.60\times 10^{-5}$ & $1.51\times 10^{-5}$ \\
Normalised holonomy mean $D_{\mathrm{hol}}$ & 0.554 & 1.002 \\
Normalised holonomy max $D_{\mathrm{hol}}$ & 0.842 & 1.045 \\
Usable edges in shearing analysis & 183 & 183 \\
Slack min & 2.314 & 2.009 \\
Slack median & 8.119 & 3.932 \\
Slack mean & 14.886 & 4.937 \\
Slack max & 399.525 & 41.516 \\
$D_{\mathrm{shear}}$ min & 0.0354 & 0.6062 \\
$D_{\mathrm{shear}}$ median & 0.2055 & 0.6437 \\
$D_{\mathrm{shear}}$ mean & 0.1989 & 0.6437 \\
$D_{\mathrm{shear}}$ max & 0.3743 & 0.6942 \\
\hline
\end{tabular}
\caption{Comparison between the baseline Llama run and a null control in which the learned local bases are replaced by random orthonormal bases of the same shape. The numerical gauge construction remains stable, but both holonomy and shearing increase sharply under the null, indicating that the reported geometric structure depends on learned local alignment rather than arbitrary frame choices.}
\label{tab:null_random_bases}
\end{table*}

\section{Additional hyperparameter-robustness experiment: atlas granularity and local basis dimension}
\label{app:hparam_robustness}

To test whether the main empirical findings depend on a single atlas configuration, we performed a targeted hyperparameter-robustness sweep on the baseline model \texttt{meta-llama/Llama-3.2-3B-Instruct} at layer $\ell=16$. In this experiment we held the model, layer, input corpora, activation budget, and all downstream solver settings fixed, and varied only two atlas-construction parameters: the number of clusters $C$ and the local basis dimension $k$. We evaluated the configurations $(C,k)\in\{(64,16),(64,32),(128,16),(128,32),(256,16),(256,32)\}$, where $(128,32)$ is the baseline setting used in the main text. This sweep probes both atlas granularity and local chart dimension while preserving the rest of the pipeline.

All tested configurations completed successfully and produced nondegenerate outputs for holonomy, shearing, and certified jamming. In every case, the gauge sanity check remained numerically valid: tree-edge residuals stayed at numerical precision and chord residuals matched explicit holonomy defects to within small numerical error. Moreover, all runs produced nonzero shearing scores and nontrivial certified jamming results with no certification violations. Thus, the framework is not restricted to a single hand-picked atlas setting.

\subsection{Experimental configuration}
\label{app:hparam_robustness_setup}

Table~\ref{tab:hparam_robustness_setup} summarises the configuration of the sweep. Only the atlas granularity $C$ and the local basis dimension $k$ were varied.

\begin{table}[h]
\centering
\small
\begin{tabular}{ll}
\hline
\textbf{Setting} & \textbf{Value} \\
\hline
Model & \texttt{meta-llama/Llama-3.2-3B-Instruct} \\
Layer & $16$ \\
Atlas sizes tested & $C\in\{64,128,256\}$ \\
Local basis dimensions tested & $k\in\{16,32\}$ \\
Activation budget & $200{,}000$ tokens \\
Sequence length & $256$ \\
Sampling stride & every $8$ tokens \\
kNN graph degree & $6$ \\
Ridge parameter & $\lambda = 10^{-2}$ \\
Minimum overlap & $256$ \\
Maximum overlap & $8000$ \\
Gradient samples per cluster & $512$ \\
Jamming analysis clusters & $40$ \\
Jamming dictionary width & $m=256$ \\
Jamming sparsity parameter & $\alpha = 1.0$ \\
\hline
\end{tabular}
\caption{Configuration used for the atlas hyperparameter-robustness experiment. All settings were fixed except the number of clusters $C$ and the local basis dimension $k$.}
\label{tab:hparam_robustness_setup}
\end{table}

\subsection{Configuration-by-configuration summary}
\label{app:hparam_robustness_table}

Table~\ref{tab:hparam_robustness} reports the main summary statistics for each tested $(C,k)$ pair. The most important patterns are the relative stability of mean holonomy across the full sweep, the gradual decrease in typical shearing as the atlas becomes finer, and the persistence of certified jamming across all configurations.

\begin{table*}[t]
\centering
\small
\resizebox{\textwidth}{!}{
\begin{tabular}{lcccccc}
\hline
\textbf{Metric} & \textbf{$(64,16)$} & \textbf{$(64,32)$} & \textbf{$(128,16)$} & \textbf{$(128,32)$} & \textbf{$(256,16)$} & \textbf{$(256,32)$} \\
\hline
Graph edges & 310 & 310 & 580 & 580 & 1188 & 1188 \\
Saved edge ops & 142 & 142 & 192 & 192 & 171 & 171 \\
Connected components & 2 & 2 & 7 & 7 & 28 & 28 \\
Largest connected component size & 55 & 55 & 90 & 90 & 99 & 99 \\
Chord edges in LCC & 85 & 85 & 84 & 84 & 23 & 23 \\
Tree residual mean & $7.24\times 10^{-6}$ & $1.26\times 10^{-5}$ & $9.25\times 10^{-6}$ & $1.60\times 10^{-5}$ & $1.60\times 10^{-5}$ & $2.44\times 10^{-5}$ \\
Normalised holonomy mean $D_{\mathrm{hol}}$ & 0.576 & 0.559 & 0.561 & 0.554 & 0.566 & 0.537 \\
Normalised holonomy max $D_{\mathrm{hol}}$ & 0.874 & 0.765 & 0.899 & 0.842 & 0.749 & 0.702 \\
Saved clusters for jamming & 64 & 64 & 124 & 124 & 237 & 237 \\
Clusters with positive certified bound & 36 & 36 & 35 & 35 & 38 & 38 \\
Certified slack median & 5.791 & 5.791 & 7.158 & 7.158 & 6.453 & 6.453 \\
$\mathrm{Corr}(J,E_{\mathrm{full\_proj}})$ & 0.695 & 0.695 & 0.634 & 0.634 & 0.376 & 0.376 \\
$\mathrm{Corr}(J,E_{\mathrm{cert}})$ & 0.538 & 0.538 & 0.539 & 0.539 & 0.419 & 0.419 \\
Usable edges in shearing analysis & 139 & 139 & 183 & 183 & 162 & 162 \\
Slack median & 5.750 & 6.810 & 6.431 & 8.119 & 6.148 & 6.955 \\
$D_{\mathrm{shear}}$ median & 0.245 & 0.218 & 0.206 & 0.206 & 0.186 & 0.193 \\
\hline
\end{tabular}}
\caption{Hyperparameter-robustness sweep over atlas size $C$ and local basis dimension $k$ on \texttt{meta-llama/Llama-3.2-3B-Instruct} at layer $16$. The baseline setting used in the main text is $(C,k)=(128,32)$.}
\label{tab:hparam_robustness}
\end{table*}

\subsection{Holonomy robustness}
\label{app:hparam_robustness_holonomy}

The holonomy signal remains clearly present across all tested atlas configurations. As shown in Table~\ref{tab:hparam_robustness}, the mean normalised holonomy $D_{\mathrm{hol}}$ stays within a relatively narrow band, ranging from $0.537$ to $0.576$ across the full sweep. This is a notably small variation given that the atlas size changes by a factor of four and the local chart dimension doubles. In other words, the presence of nontrivial cycle inconsistency is highly robust to the tested atlas choices.

At the same time, the graph structure changes substantially with atlas granularity. Coarser atlases ($C=64$) yield fewer total graph edges but relatively well-connected largest components, whereas finer atlases ($C=256$) produce many more graph edges overall but far fewer chord edges in the largest connected component after overlap filtering. Thus, the holonomy computation remains meaningful across all tested atlas granularities, but the effective cycle structure on which it is computed becomes sparser as the atlas becomes finer. This is an interpretable and expected behaviour rather than a failure mode.

\subsection{Certified jamming robustness}
\label{app:hparam_robustness_jamming}

Certified jamming also remains nontrivial throughout the sweep. Every tested configuration yields positive certified bounds on most of the $40$ analysed clusters, with certification counts ranging from $35$ to $38$, and no certification violations. This indicates that the local interference structure identified by the certification procedure is not tied to a single atlas resolution.

The most interesting variation appears in the correlation between the jamming index and projected interaction energy. At the coarsest atlas setting ($C=64$), $\mathrm{Corr}(J,E_{\mathrm{full\_proj}})$ is relatively strong ($0.695$). At the baseline resolution ($C=128$), it remains clearly positive ($0.634$). At the finest atlas setting ($C=256$), however, it drops to $0.376$. The same broad weakening is visible in the certified-energy correlation. This suggests that finer atlases may separate the representation into more localised clusters in a way that weakens the direct coupling between the jamming index and projected interaction energy, even though certification itself remains present.

It is also worth noting that the jamming summaries are unchanged when only $k$ varies at fixed $C$. This is expected from the current pipeline design: the jamming stage depends on the cluster-level activation/gradient samples and therefore tracks the choice of atlas partition $C$, but does not directly depend on the local basis dimension used in the transport computations.

\subsection{Shearing robustness}
\label{app:hparam_robustness_shearing}

The shearing analysis remains active across all tested settings, and the resulting pattern is particularly interpretable. The median shearing score $D_{\mathrm{shear}}$ is largest for the coarsest atlas, taking values $0.245$ and $0.218$ for $(64,16)$ and $(64,32)$ respectively. It is intermediate near the baseline setting, at approximately $0.206$, and becomes smaller for the finest atlas, taking values $0.186$ and $0.193$ for $(256,16)$ and $(256,32)$. Thus, typical local transport mismatch decreases as the atlas becomes finer.

This trend is intuitively reasonable: a finer atlas provides smaller local patches and can therefore reduce the amount of mismatch that must be absorbed by a single edge transport. By contrast, a coarser atlas forces broader local variation into fewer charts, which increases the typical shearing burden on edge transports. The effect of $k$ at fixed $C$ is more modest than the effect of changing $C$, indicating that atlas granularity is the dominant factor in this sweep.

The slack statistics vary more than the shearing scores themselves, but remain finite and nontrivial across all runs. As elsewhere in the paper, median slack is the more informative summary for typical behaviour than mean slack, because the mean can be influenced by rare high-slack edges.

\subsection{Summary}
\label{app:hparam_robustness_summary}

Taken together, these hyperparameter sweeps show that the framework is robust to substantial changes in atlas granularity and local basis dimension. Across all tested $(C,k)$ settings, the gauge sanity checks remain valid, holonomy remains nonzero, shearing remains nonzero, and certified jamming remains nontrivial. The qualitative presence of the three main diagnostics therefore does not depend on a single hand-chosen atlas configuration.

At the same time, the magnitudes vary in structured and interpretable ways. Mean holonomy is notably stable across the sweep, suggesting that cycle inconsistency is a robust feature of the representation under the tested atlas choices. Typical shearing decreases as the atlas becomes finer, indicating that transport mismatch is sensitive to chart granularity. Certified jamming persists throughout, but the empirical coupling between jamming and projected energy weakens for finer atlases. Overall, this experiment supports the view that the framework is robust in kind while remaining meaningfully sensitive to how the atlas is constructed.

\section{Additional graph-construction robustness experiment}
\label{app:graph_construction_robustness}

To test whether the principal geometric diagnostics depend strongly on one particular graph-construction rule, we repeated the baseline Llama experiment while varying the neighbourhood parameter used in the cluster graph. Concretely, we kept the model, layer, activation set, cluster count, local basis dimension, and all downstream analysis settings fixed, and varied only the graph degree in the centroid-based $k$-nearest-neighbour graph. In addition to the baseline setting $\mathrm{knn}=6$, we evaluated $\mathrm{knn}\in\{4,8\}$. This experiment therefore isolates the effect of modest graph sparsification or densification while leaving the remainder of the atlas construction unchanged.

Both additional runs completed successfully and produced nondegenerate outputs for holonomy, shearing, and certified jamming. In both cases, the gauge sanity check remained valid: tree-edge residuals stayed at numerical precision, while chord residuals matched explicit holonomy defects to within small numerical error. The resulting comparison shows that moderate changes in the graph-construction rule alter the graph combinatorics in the expected way, but do not materially change the principal geometric conclusions. In particular, both mean holonomy and median shearing remain very close to the baseline values across the tested graph degrees.

\subsection{Experimental configuration}
\label{app:graph_construction_robustness_setup}

Table~\ref{tab:graph_construction_robustness_setup} summarises the fixed configuration used for the graph-construction robustness experiment. The only quantity varied across runs is the graph neighbourhood parameter $\mathrm{knn}$.

\begin{table}[h]
\centering
\small
\begin{tabular}{ll}
\hline
\textbf{Setting} & \textbf{Value} \\
\hline
Model & \texttt{meta-llama/Llama-3.2-3B-Instruct} \\
Layer & $16$ \\
Activation source & fixed baseline activations \\
Atlas size & $C=128$ \\
Local basis dimension & $k=32$ \\
Graph degrees tested & $\mathrm{knn}\in\{4,6,8\}$ \\
Ridge parameter & $\lambda = 10^{-2}$ \\
Minimum overlap & $256$ \\
Maximum overlap & $8000$ \\
Gradient samples per cluster & $512$ \\
Jamming analysis clusters & $40$ \\
Jamming dictionary width & $m=256$ \\
Jamming sparsity parameter & $\alpha = 1.0$ \\
\hline
\end{tabular}
\caption{Configuration used for the graph-construction robustness experiment. All settings were fixed except the neighbourhood parameter $\mathrm{knn}$ used in the cluster graph.}
\label{tab:graph_construction_robustness_setup}
\end{table}

\subsection{Summary of results}
\label{app:graph_construction_robustness_table}

Table~\ref{tab:graph_construction_robustness} summarises the principal graph-level, jamming, and shearing quantities for $\mathrm{knn}=4,6,8$. The most important pattern is that the graph combinatorics change as expected with $\mathrm{knn}$, but the main holonomy and shearing summaries remain highly stable.

\begin{table*}[t]
\centering
\small
\begin{tabular}{lccc}
\hline
\textbf{Metric} & \textbf{$\mathrm{knn}=4$} & \textbf{$\mathrm{knn}=6$ (baseline)} & \textbf{$\mathrm{knn}=8$} \\
\hline
Graph edges & 388 & 580 & 780 \\
Saved edge ops & 177 & 192 & 200 \\
Connected components & 10 & 7 & 7 \\
Largest connected component size & 84 & 90 & 90 \\
Chord edges in LCC & 74 & 84 & 92 \\
Tree residual mean & $1.56\times 10^{-5}$ & $1.60\times 10^{-5}$ & $1.70\times 10^{-5}$ \\
Normalised holonomy mean $D_{\mathrm{hol}}$ & 0.551 & 0.554 & 0.582 \\
Normalised holonomy max $D_{\mathrm{hol}}$ & 0.842 & 0.842 & 0.924 \\
Saved clusters for jamming & 124 & 124 & 124 \\
Clusters with positive certified bound & 35 & 35 & 35 \\
Certified slack median & 7.158 & 7.158 & 7.158 \\
$\mathrm{Corr}(J,E_{\mathrm{full\_proj}})$ & 0.634 & 0.634 & 0.634 \\
$\mathrm{Corr}(J,E_{\mathrm{cert}})$ & 0.539 & 0.539 & 0.539 \\
Usable edges in shearing analysis & 170 & 183 & 191 \\
Slack median & 8.052 & 8.119 & 7.940 \\
$D_{\mathrm{shear}}$ median & 0.203 & 0.206 & 0.208 \\
\hline
\end{tabular}
\caption{Graph-construction robustness sweep over the neighbourhood parameter $\mathrm{knn}$ on \texttt{meta-llama/Llama-3.2-3B-Instruct} at layer $16$. The baseline setting used in the main text is $\mathrm{knn}=6$.}
\label{tab:graph_construction_robustness}
\end{table*}

\subsection{Holonomy robustness under graph changes}
\label{app:graph_construction_robustness_holonomy}

As shown in Table~\ref{tab:graph_construction_robustness}, varying $\mathrm{knn}$ changes the graph structure in the expected direction: $\mathrm{knn}=4$ produces a sparser graph with fewer total edges and fewer chord edges in the largest connected component, whereas $\mathrm{knn}=8$ produces a denser graph with more retained edges and more cycle opportunities. Despite this, the holonomy computation remains numerically stable throughout. The mean tree residual remains on the order of $10^{-5}$ in all three runs, indicating that the gauge-fixing step is insensitive to these modest graph changes.

More importantly, the mean normalised holonomy $D_{\mathrm{hol}}$ remains very close across the sweep: it is $0.551$ for $\mathrm{knn}=4$, $0.554$ for the baseline $\mathrm{knn}=6$, and $0.582$ for $\mathrm{knn}=8$. Thus, while increasing $\mathrm{knn}$ changes the graph combinatorics and slightly increases the available cycle structure, the magnitude of the holonomy signal remains highly stable. This indicates that the observed cycle inconsistency is not an artefact of one arbitrary neighbourhood choice.

\subsection{Certified jamming under graph changes}
\label{app:graph_construction_robustness_jamming}

Table~\ref{tab:graph_construction_robustness} also shows that the jamming summaries are unchanged across the tested graph constructions. This is expected from the structure of the pipeline: the certified jamming stage depends on the cluster-level activation/gradient samples, which are determined by the fixed activations and cluster partition, rather than by the graph neighbourhood rule. Accordingly, the number of certified clusters, the median certified slack, and the correlations between the jamming index and projected/certified energy remain identical across $\mathrm{knn}=4,6,8$.

This invariance is useful rather than problematic. It confirms that changing the graph-construction rule isolates the holonomy and shearing parts of the analysis without inadvertently changing the jamming stage. In other words, the graph sweep is a clean robustness test for the graph-dependent parts of the framework.

\subsection{Shearing robustness under graph changes}
\label{app:graph_construction_robustness_shearing}

The shearing results are similarly stable. As shown in Table~\ref{tab:graph_construction_robustness}, the number of usable edges in the shearing analysis changes with $\mathrm{knn}$, from $170$ at $\mathrm{knn}=4$ to $191$ at $\mathrm{knn}=8$, which is consistent with the denser graph producing more candidate edges. However, the median shearing score changes only minimally: $0.203$ for $\mathrm{knn}=4$, $0.206$ at baseline, and $0.208$ for $\mathrm{knn}=8$. The median slack is likewise nearly unchanged across the sweep.

This is a strong robustness result. It shows that the typical magnitude of the shearing signal does not depend materially on whether the graph is constructed slightly more sparsely or slightly more densely. The local transport mismatch diagnosed by the framework therefore appears to reflect a property of the learned representation itself rather than a fragile consequence of one particular graph recipe.

\subsection{Summary}
\label{app:graph_construction_robustness_summary}

Taken together, these graph-construction experiments show that the main geometric conclusions are robust to moderate changes in the neighbourhood rule used to build the atlas graph. As summarised in Table~\ref{tab:graph_construction_robustness}, varying $\mathrm{knn}$ changes the combinatorial properties of the graph in the expected way, but does not materially change the principal holonomy or shearing summaries. Mean holonomy remains in a narrow range, median shearing remains almost unchanged, and the graph-independent jamming summaries remain exactly stable. These results support the view that the reported geometric structure is not an artefact of one arbitrary graph-construction choice.

\section{Additional transport-ablation experiment: ridge regularisation robustness}
\label{app:transport_ablation}

To test whether the principal transport-dependent conclusions depend on one particular regularisation setting, we performed a transport-ablation experiment on the baseline model \texttt{meta-llama/Llama-3.2-3B-Instruct} at layer $\ell=16$. In this experiment we held the activations, cluster graph, local bases, and all downstream analysis settings fixed, and varied only the ridge parameter $\lambda$ used in the edge-transport estimation step. In addition to the baseline value $\lambda=10^{-2}$, we evaluated $\lambda\in\{10^{-3},10^{-1}\}$. This sweep therefore isolates the effect of transport regularisation while leaving the remainder of the atlas construction unchanged.

All three runs completed successfully and produced nondegenerate outputs for holonomy and shearing. Across the full sweep, the gauge sanity check remained valid, the graph-level cycle structure was unchanged, and the principal transport-sensitive quantities varied only negligibly. In practice, the measured holonomy and shearing summaries are almost identical across the three regularisation settings. This indicates that the empirical conclusions are not an artefact of a finely tuned ridge parameter.

\subsection{Experimental configuration}
\label{app:transport_ablation_setup}

Table~\ref{tab:transport_ablation_setup} summarises the fixed configuration used for the transport-ablation experiment. The only quantity varied across runs is the ridge parameter $\lambda$ used in the transport estimation stage.

\begin{table}[h]
\centering
\small
\begin{tabular}{ll}
\hline
\textbf{Setting} & \textbf{Value} \\
\hline
Model & \texttt{meta-llama/Llama-3.2-3B-Instruct} \\
Layer & $16$ \\
Activation source & fixed baseline activations \\
Graph source & fixed baseline cluster graph \\
Local bases source & fixed baseline local bases \\
Atlas size & $C=128$ \\
Local basis dimension & $k=32$ \\
Ridge parameters tested & $\lambda\in\{10^{-3},10^{-2},10^{-1}\}$ \\
Minimum overlap & $256$ \\
Maximum overlap & $8000$ \\
\hline
\end{tabular}
\caption{Configuration used for the transport-ablation experiment. All settings were fixed except the ridge parameter $\lambda$ used in the edge-transport estimation step.}
\label{tab:transport_ablation_setup}
\end{table}

\subsection{Summary of results}
\label{app:transport_ablation_table}

Table~\ref{tab:transport_ablation} summarises the principal holonomy and shearing quantities across the three ridge settings. The key observation is that the results are effectively unchanged across the full order-of-magnitude sweep on either side of the baseline value.

\begin{table*}[t]
\centering
\small
\begin{tabular}{lccc}
\hline
\textbf{Metric} & \textbf{$\lambda=10^{-3}$} & \textbf{$\lambda=10^{-2}$ (baseline)} & \textbf{$\lambda=10^{-1}$} \\
\hline
Saved edge ops & 192 & 192 & 192 \\
Connected components & 7 & 7 & 7 \\
Largest connected component size & 90 & 90 & 90 \\
Chord edges in LCC & 84 & 84 & 84 \\
Tree residual mean & $1.59\times 10^{-5}$ & $1.60\times 10^{-5}$ & $1.60\times 10^{-5}$ \\
Normalised holonomy mean $D_{\mathrm{hol}}$ & 0.554125 & 0.554122 & 0.554091 \\
Normalised holonomy max $D_{\mathrm{hol}}$ & 0.842083 & 0.842044 & 0.841658 \\
Usable edges in shearing analysis & 183 & 183 & 183 \\
Slack median & 8.118857 & 8.118773 & 8.117994 \\
Slack mean & 14.888190 & 14.886200 & 14.864558 \\
Slack max & 399.773431 & 399.525192 & 396.688072 \\
$D_{\mathrm{shear}}$ median & 0.205509 & 0.205509 & 0.205502 \\
$D_{\mathrm{shear}}$ mean & 0.198937 & 0.198935 & 0.198911 \\
$D_{\mathrm{shear}}$ max & 0.374262 & 0.374252 & 0.374152 \\
\hline
\end{tabular}
\caption{Transport-ablation sweep over the ridge parameter $\lambda$ on \texttt{meta-llama/Llama-3.2-3B-Instruct} at layer $16$. The baseline setting used in the main text is $\lambda=10^{-2}$.}
\label{tab:transport_ablation}
\end{table*}

\subsection{Holonomy robustness under transport regularisation}
\label{app:transport_ablation_holonomy}

As shown in Table~\ref{tab:transport_ablation}, the graph-level holonomy computation is essentially invariant across the tested ridge settings. The number of saved edge operators, the number of connected components, the size of the largest connected component, and the number of chord edges in that component remain exactly unchanged across $\lambda=10^{-3},10^{-2},10^{-1}$. The mean tree residual also remains at numerical precision throughout the sweep.

More importantly, the mean normalised holonomy $D_{\mathrm{hol}}$ is effectively identical across the three settings: it is $0.554125$ for $\lambda=10^{-3}$, $0.554122$ at baseline, and $0.554091$ for $\lambda=10^{-1}$. The maximum normalised holonomy changes only in the fourth decimal place. Thus, within this order-of-magnitude sweep around the baseline, the measured holonomy signal is not sensitive to the transport regularisation strength.

\subsection{Shearing robustness under transport regularisation}
\label{app:transport_ablation_shearing}

The shearing results are equally stable. Table~\ref{tab:transport_ablation} shows that the number of usable edges in the shearing analysis remains fixed at $183$ across the full sweep. The median shearing score $D_{\mathrm{shear}}$ is also essentially unchanged: $0.205509$ for $\lambda=10^{-3}$, $0.205509$ for the baseline setting, and $0.205502$ for $\lambda=10^{-1}$. The mean and maximum shearing scores are similarly stable.

The slack distribution likewise shows negligible variation. As reported in Table~\ref{tab:transport_ablation}, the median slack stays near $8.118$ throughout, while the mean and maximum slack change only slightly. This indicates that the tightness of the shearing lower bound is also robust to moderate changes in the ridge parameter. In practical terms, the typical transport mismatch diagnosed by the framework does not depend materially on whether the transport fit is regularised somewhat more weakly or somewhat more strongly than in the baseline run.

\subsection{Summary}
\label{app:transport_ablation_summary}

Taken together, these transport-ablation experiments show that the transport-dependent conclusions of the framework are robust to moderate variation in the ridge regularisation parameter. As summarised in Table~\ref{tab:transport_ablation}, an order-of-magnitude sweep on either side of the baseline value leaves the principal holonomy and shearing statistics essentially unchanged. The graph-level cycle structure is identical across runs, the holonomy summaries are numerically indistinguishable to practical precision, and the shearing summaries vary only negligibly. These results support the view that the reported geometric structure is not an artefact of one finely tuned transport regularisation choice.

\end{document}

%% file: math_commands.tex
\usepackage{amsmath,amsfonts,bm}

\def\eqref#1{equation~\ref{#1}}

\def\1{\bm{1}}

\DeclareMathAlphabet{\mathsfit}{\encodingdefault}{\sfdefault}{m}{sl}
\SetMathAlphabet{\mathsfit}{bold}{\encodingdefault}{\sfdefault}{bx}{n}

\newcommand{\E}{\mathbb{E}}

\newcommand{\R}{\mathbb{R}}

